\documentclass[envcountsame,runningheads]{llncs}

\usepackage[utf8]{inputenc}
\usepackage{xspace}
\usepackage{amssymb}
\usepackage{amsmath}
\usepackage{amsthm}
\usepackage{xcolor}
\usepackage[defblank]{paralist}
\usepackage{enumitem}
\usepackage{thmtools}
\usepackage[strings]{underscore} 
\usepackage[linesnumberedhidden,ruled,vlined]{algorithm2e}

\spnewtheorem{claimn}{Claim}{\it}{}

\usepackage{hyperref}


\setlength{\marginparwidth}{35mm}


\makeatletter
\newcommand{\nobrackettag}[0]{\def\tagform@##1{\maketag@@@{##1}}}
\makeatother

\newcommand{\envendmark}{\mbox{$\triangleleft$}}
\newcommand{\envend}{\hfill\envendmark}


\newcommand{\p}{\varphi}
\newcommand{\q}{\psi}

\newcommand{\ua}{\underline{a}}
\newcommand{\ud}{\underline{d}}
\newcommand{\ut}{\underline{t}}
\newcommand{\ux}{\underline{x}}
\newcommand{\uy}{\underline{y}}
\newcommand{\uz}{\underline{z}}

\newcommand{\sttr}[2][]{\ensuremath{T_{#1}(#2)}\xspace}

\newcommand{\ex}[1]{\textsf{#1}}



\newcommand{\sig}[1]{\ensuremath{\Sigma_{#1}}\xspace}

\newcommand{\Var}{\ensuremath{\mathsf{Var}}\xspace}
\newcommand{\ar}{\ensuremath{\mathsf{ar}}}
\newcommand{\QE}{\ensuremath{\mathsf{QE}}}

\newcommand{\Pre}{\ensuremath{\textit{Pre}}\xspace}


\newcommand{\assign}{\ensuremath{\mathfrak{a}}\xspace}

\newcommand{\dom}[1][\Imc]{\Delta^{#1}}
\newcommand{\Int}[2][\Imc]{#2^{#1}}


\newcommand{\ISA}{\sqsubseteq}

\newcommand{\SOMET}[1]{\exists #1.\top}
\newcommand{\SOME}[2]{\exists #1.#2}
\newcommand{\AND}{\sqcap}
\newcommand{\NOT}{\lnot}


\newcommand{\NC}{\ensuremath{\mathsf{N}_{\mathsf{C}}}\xspace}
\newcommand{\NI}{\ensuremath{\mathsf{N}_{\mathsf{I}}}\xspace}

\newcommand{\NR}{\ensuremath{\mathsf{N}_{\mathsf{R}}\xspace}}
\newcommand{\NPr}{\ensuremath{\mathsf{N}_{\mathsf{P}}\xspace}}
\newcommand{\NF}{\ensuremath{\mathsf{N}_{\mathsf{F}}\xspace}}




\newcommand{\ourDL}{\textit{RDFS}\ensuremath{\!_{_+}}\xspace}

\newcommand{\dllite}{\textit{DL-Lite}\xspace}


\newcommand{\NP}{\textsc{NP}\xspace}
\newcommand{\PSpace}{\textsc{PSpace}\xspace}
\newcommand{\NPSpace}{\textsc{NPSpace}\xspace}




\newcommand{\Amc}{\ensuremath{\mathcal{A}}\xspace}

\newcommand{\Imc}{\ensuremath{\mathcal{I}}\xspace}
\newcommand{\Jmc}{\ensuremath{\mathcal{J}}\xspace}

\newcommand{\Omc}{\ensuremath{\mathcal{O}}\xspace}

\newcommand{\Smc}{\ensuremath{\mathcal{S}}\xspace}
\newcommand{\Tmc}{\ensuremath{\mathcal{T}}\xspace}





\sloppy

\begin{document}

\title{SMT-Based Safety Verification of\\
 Data-Aware Processes under Ontologies\\
 (Extended Version)}
\titlerunning{SMT-Based Verification of DAPs under
 Ontologies (Ext. Vers.)}

\author{Diego Calvanese\inst{1,2} \and
 Alessandro Gianola\inst{1} \and\\
 Andrea Mazzullo\inst{1} \and
 Marco Montali\inst{1}}
\authorrunning{D.~Calvanese et al.}

\institute{KRDB Research Centre for Knowledge and Data\\
 Free University of Bozen-Bolzano, Italy\\
 \email{\textit{surname}@inf.unibz.it}
 \and Computing Science Department, Ume\aa\ University, Sweden}

\maketitle




\begin{abstract}
  In the context of verification of data-aware processes (DAPs), a formal approach
  based on \emph{satisfiability modulo theories} (SMT) has been considered to
  verify parameterised safety properties of so-called \emph{artifact-centric
   systems}.  This approach requires a combination of model-theoretic notions
  and algorithmic techniques based on \emph{backward reachability}.
  We introduce here a variant of one of the most investigated models in this
  spectrum, namely \emph{simple artifact systems} (\emph{SASs}), where, instead
  of managing a database, we operate over a description logic (DL) ontology
  expressed in (a slight extension of) RDFS.
  This DL, enjoying suitable model-theoretic properties, allows us to define
  DL-based SASs to which backward reachability can still be applied, leading to
  decidability in \PSpace of the corresponding safety problems.
\end{abstract}

\section{Introduction}
\label{sec:intro}

Verifying and reasoning about dynamic systems that integrate processes and data
is a long-standing challenge that attracted considerable attention, and that
led to a flourishing series of results, within business process
management~\cite{Reic12,bpm19,bpm20} and data
management~\cite{Vian09,CaDM13,BCDDM13,DeHV14,DeLV19}. Among the several
conceptual models studied in this area, data-centric systems and in particular artifact-centric
systems have been brought forward as a principled approach where relevant
(business) objects are elicited, then defining how actions evolve them
throughout their lifecycle~\cite{Hull08}. Different formal models have been
proposed to capture artifact systems and study their
verification~\cite{CaDM13}.  One of the most studied settings considers
artifact systems as being composed of: a \emph{read-only database} storing
background information about artifacts that does not change during the
evolution of the system; a \emph{working memory}, used to store data that can
be modified in the course of the evolution; and \emph{transitions} (also called
\emph{actions} or \emph{services}) that query the read-only database and the
working memory and use the retrieved answers to update the working memory.
Verification of such systems is particularly challenging, not only because the
working memory in general evolves through infinitely many different
configurations, but also because the desired verification properties should
hold regardless of the specific content of the read-only database, thus calling
for a particular form of parameterised verification
\cite{DaDV12,DeLV19,CalEtAl19,CalEtAl20}.

In this paper, we study for the first time semantic artifact systems where the
read-only database is substituted by a Description Logic ontology, which stores
background, incomplete information about the artifacts.
In this setting, two possible notions of parameterisation may be studied: one where the evolution of the system is verified against all possible choices for the ABox, another where verification is against all possible models of a fixed ABox.
In this work, we adopt the latter hypothesis, and thus verify whether the artifact system enjoys desired properties irrespectively of how the information explicitly provided by the
ABox is completed through the TBox assertions.

More in detail, we consider an extensively studied, basic model of such
artifact-centric systems, called \emph{simple artifact system} (\emph{SAS})
in~\cite{CalEtAl20}, where the artifact working memory consists of a fixed set
of \emph{artifact variables} \cite{DHPV09,DaDV12,CalEtAl20}. On top of this
basis, we study the verification of safety properties in the case where the
ontology is specified in (a slight extension of) RDFS~\cite{W3Crec-RDFS}, a
schema language for the Semantic Web formalized by the W3C, and we make use of
the ontology signature to express the transitions that update the working
memory. For this setting, we show that we can decide safety properties in
\PSpace by relying on an SMT-based backward reachability procedure.

In spirit, our approach is reminiscent of previous works studying the
verification of dynamic systems (in particular, Golog programs) operating over
a DL ontology, such as~\cite{CLLZ14,ZaCl16}. In fact, both in their settings
and ours, the dynamic system evolves each model of the ontology, and
verification properties are assessed over all the resulting evolutions. This is
radically different from approaches where the ABox itself is evolved by the
process, with an execution semantics following Levesque's functional approach,
in which query entailment over the current state is used to compute the
successor states~\cite{BCDD*12,BCMD*13}.  However, we differ
from~\cite{CLLZ14,ZaCl16} in that our goal is not only to derive foundational
results, but also to transfer such results into practical algorithms and thus
obtain a model that is readily implementable by relying on a state-of-the-art
SMT-based model checker such as MCMT~\cite{mcmt10}.
As customary in the formal literature on artifact-centric systems, our approach is based on actions that manipulate the artifact variables, coupled with condition-action rules that declaratively define which actions are currently executable, and with which parameters. Alternative choices could be seamlessly taken, by adapting approaches that rely on an explicit description of the control-flow, e.g., based on state machines \cite{LeoniFM20} or Petri nets interpreted with interleaving semantics \cite{bpm20,MontaliR020}. 

This paper is the extended version of~\cite{DL21}. All the detailed proofs can be found in the appendix.

\section{Preliminaries}
\label{sec:prel}

In this section, we first recall the syntax and semantics of first-order logic
(FO). We then define the syntax of the DL \ourDL considered in this paper,
which is a slight extension of RDFS~\cite{W3Crec-RDFS}.
Its semantics is provided by means of the standard translation, mapping
\ourDL ontologies into equivalent sets of FO formulas.



\subsection{First-Order Logic Preliminaries}

The alphabet of \emph{first-order logic} (\emph{FO}) consists of: countably
infinite and pairwise disjoint sets $\NPr$ of \emph{predicate symbols}
(with $\ar(P) \in \mathbb{N}$ being the arity of $P \in \NPr$),
$\NF$ of \emph{function symbols}
(with $\ar(f) \in \mathbb{N}$ being the arity of $f \in \NF$),
$\NI$ of \emph{individual symbols} (or \emph{individual names}),
and $\Var$ of \emph{variables};
the \emph{equality symbol} `$=$';
the \emph{Boolean operators} `$\lnot$'
and `$\land$';
and the \emph{existential quantifier} `$\exists$'.
An (\textit{FO}) \emph{formula} is an expression
$\p ::= P(\ut) \mid s = t \mid \lnot \p \mid (\p \land \p) \mid \exists x \p$,
where $x \in \Var$, $P \in \NPr$,
$s$, $t$ are terms, and
$\ut = (t_1, \ldots, t_{\ar(P)})$ is a (possibly empty) tuple of terms,
where \emph{terms} are defined inductively as follows:
$t ::= x \mid a \mid f(\ut)$, where $x \in \Var$, $a \in \NI$,
$f \in \NF$, and
$\ut = (t_1, \ldots, t_{\ar(f)})$.  A formula of the form
$P(\ut)$ is called an \emph{atom}, and a \emph{literal} has the form
$P(\ut)$ or $\lnot P(\ut)$.
We adopt the usual abbreviations and conventions: in particular, $\p \lor \q
= \lnot (\lnot \p \land \lnot \q)$ and $\forall x \p = \lnot \exists x \lnot
\p$, where $\forall$ is the \emph{universal quantifier}.
%
We write $\p(\ux)$ to indicate that the \emph{free variables} (defined as
usual) of $\p$ are included in $\ux$, and we write $\p(\ua)$ for the formula
obtained from $\p(\ux)$ by substituting $\ua$ to $\ux$.  Similar notions and
notation are adopted for terms.  A \emph{sentence} is defined as a formula
without free variables, while we call
\emph{quantifier-free} a formula without any occurrence of existential or
universal quantifiers.
A formula is \emph{existential} if it has the form $\exists \ux \p(\ux)$, where
$\p$ is a quantifier-free formula, and it is \emph{universal} if it has the
form $\forall \ux \p(\ux)$, where $\p$ is quantifier-free.
A (\textit{FO}) \emph{theory} $T$ is a set of FO sentences, and $T$ is said to
be \emph{universal} if every $\p \in T$ is universal.  A \emph{signature}
$\Sigma$ is a subset of $\NPr \cup \NF \cup \NI$. For a set $\Gamma$ of
formulas, the \emph{signature of $\Gamma$}, denoted $\sig{\Gamma}$, is the set
of predicate, function, and individual symbols occurring in $\Gamma$.  Given a
signature $\Sigma$, we say that $\Gamma$ is a set of \emph{$\Sigma$-formulas}
if $\sig{\Gamma} = \Sigma$ (we will use \emph{$\Sigma$-formula},
\emph{$\Sigma$-theory}, etc., in an analogous way).
%


An (\textit{FO}) \emph{interpretation} is a pair
$\Imc =(\dom, \cdot^{\Imc})$, where
$\dom$ is a non-empty set, called
\emph{domain} of $\Imc$, and $\cdot^{\Imc}$ is an \emph{interpretation
 function} such that:
$\Int{P} \subseteq (\dom)^{\ar(P)}$, for every $P \in \NP$;
$\Int{f} \colon (\dom)^{\ar(f)} \longrightarrow \dom$, for every $f \in \NF$;
and $\Int{a} \in \dom$, for every $a\in\NI$.
An \emph{assignment} in $\Imc$ is a function
$\assign \colon \Var \longrightarrow \dom$.
We define the \emph{value} of a term $t$ in $\Imc$ under $\assign$ as follows:
$\assign(t) = \assign(x)$, if $t = x$;
$\assign(t) = \Int{a}$, if $t = a \in \NI$;
and
$\assign(t) = \Int{f}(\assign(\ut))$,
if $t = f(\ut)$, where $f \in  \NF$ and, for an $m$-tuple
$\ut = (t_1, \ldots, t_m)$ of terms, we set
$\assign(\ut) = (\assign(t_1), \ldots, \assign(t_m))$.
The notion of a formula $\p$ being \emph{satisfied} in an interpretation $\Imc$
under an assignment $\assign$, or of $\Imc$ being a \emph{model} of $\p$ under
$\assign$, written $\Imc \models^{\assign} \p$, is inductively defined as:
\[
  \begin{array}{l@{~~\text{iff}~~}l}
    \Imc \models^{\assign} P(\ut) & \assign(\ut) \in \Int{P},\\
    \Imc \models^{\assign} s = t  & \assign(s) = \assign(t),\\
    \Imc  \models^{\assign} \lnot \q & \text{not } \Imc \models^{\assign} \q,\\
    \Imc  \models^{\assign} \q \land \chi
    & \Imc \models^{\assign} \q \text{ and } \Imc \models^{\assign} \chi,\\
    \Imc  \models^{\assign} \exists x \q
    & \Imc \models^{\assign'} \q \ \text{for some } \assign'
    \text{ that can differ from } \assign \text{ on } x.
 \end{array}
\]
For a formula $\p(\ux)$, we write $\Imc \models \p[\ud]$ in place of
$\Imc \models^{\assign} \p(\ux)$, with $\assign(\ux) = \ud$.
%
We say that a set $\Gamma$ of formulas is \emph{satisfied} in an interpretation
$\Imc$ under an assignment $\assign$, or that $\Imc$ is a \emph{model} of
$\Gamma$ under $\assign$, written $\Imc \models^{\assign} \Gamma$, if
$\Imc \models^{\assign} \p$, for every $\p \in \Gamma$ (we refer to a singleton
$\Gamma = \{\p\}$ simply as $\p$).
%
%
For a sentence $\p$, the satisfaction of $\p$ in $\Imc$ under $\assign$ does
not depend on $\assign$, thus we write $\Imc \models \p$ in place of
$\Imc \models^{\assign} \p$, and we say that $\p$ is satisfied in $\Imc$.
For a theory $T$, we say that $T$ is \emph{satisfied} in an interpretation
$\Imc$ (or that $\Imc$ is a \emph{model} of $T$), written $\Imc \models T$, if
every sentence of $T$ is satisfied in $\Imc$.
A formula $\p$ is \emph{satisfiable w.r.t.\ $T$} (or \emph{$T$-satisfiable}) if
there exist an interpretation $\Imc$ and an assignment $\assign$ in $\Imc$ such
that $\Imc \models T$ and $\Imc \models^{\assign} \p$.
Moreover, we say that $T$ \emph{logically implies} a formula $\p$, or that $\p$
is a \emph{logical consequence} of $T$, written $T \models \p$, if, for every
interpretation $\Imc$ and every assignment $\assign$ in $\Imc$,
$\Imc \models T$ implies that $\Imc \models^{\assign} \p$.
Finally, formulas $\p$, $\q$ are \emph{equivalent w.r.t.\ $T$} (or
\emph{$T$-equivalent}) if $T \models \p \leftrightarrow \q$.

\subsection{Description Logics Preliminaries}


Let $\NC$, $\NR$, and $\NI$ be countably infinite and pairwise disjoint sets of
\emph{concept}, \emph{role}, and \emph{individual names}, respectively (with
$\NC\cup \NR \subseteq \NPr$, i.e., concept and role names are predicate
symbols, with arity~1 and~2, respectively).

The DL we consider here is an extension of RDFS~\cite{W3Crec-RDFS} with
disjointness between concepts and roles, conjunction and (one-level) qualified
existential quantification on the left-hand side of inclusions, and inclusion
of direct and inverse roles.  We denote such DL \ourDL, and we define it below.

In \ourDL, \emph{concepts} $C$ and \emph{roles} $R$ are defined according to
the grammar
\[
  \begin{array}{r@{~~}c@{~~}l}
    R &::=& P ~\mid~ P^{-},\\
    C &::=& A_1 \AND \cdots \AND A_n ~\mid~ \SOMET{R} ~\mid~ \SOME{R}{A},
  \end{array}
\]
where $P \in \NR$, $n\geq 1$, and $A,A_1,\ldots,A_n \in \NC$.

A \emph{concept inclusion (CI)} has the form $C \ISA A$ or $C \ISA \NOT A$, and
a \emph{role inclusion (RI)} has the form $R \ISA R'$ or $R \ISA \NOT R'$,
where $C$ is an \ourDL concept, $A\in\NC$, and $R$, $R'$ are roles.
An \ourDL \emph{TBox} $\Tmc$ is a finite set of CIs and RIs.
An \emph{assertion} has the form $A(a)$, $\lnot A(a)$, $P(a,b)$,
$\lnot P(a,b)$, $(a=b)$, or $\lnot(a=b)$, where $A \in \NC$, $P \in \NR$, and
$a,b\in\NI$.  An \emph{ABox} $\Amc$ is a finite set of assertions.  (We point
out that in an ABox we allow for negated assertions, which is a feature that is
not always supported in DLs.)
%
An \ourDL \emph{ontology} $\Omc$ is a pair $(\Tmc,\Amc)$, where $\Tmc$ is a
TBox and $\Amc$ is an ABox.

We observe that \ourDL is incomparable in expressive power with the DLs of the
popular \dllite family~\cite{CDLLR07,ACKZ09}.  Indeed, while \dllite allows for
the use of existential quantification on the right-hand side of CIs, these are
ruled out in \ourDL.  On the other hand, in \ourDL one can locally type the
second component of a role through the use of qualified existential
quantification on the left-hand side of CIs, while this is not possible in
\dllite.  As we will see later, differently from what happens for \dllite, the
FO
translation of an \ourDL ontology is a universal theory.

\begin{example}
  \label{ex:ont}
  To represent part of the domain knowledge on job hiring processes for
  university personnel, we define the \ourDL ontology $\Omc=(\Tmc,\Amc)$, where
  $\Tmc$ consists of the following concept inclusions:
  \[
    \begin{array}{rcl@{\qquad}rcl}
      \ex{AcademicPosition} &\ISA& \ex{JobPosition}
      & \ex{AcademicPosition} &\ISA& \NOT\ex{AdminPosition}\\
      \ex{AdminPosition}  &\ISA& \ex{JobPosition}
      & \ex{User} &\ISA& \NOT\ex{JobPosition}\\
      \SOMET{\ex{appliesFor}} &\ISA& \ex{User}
      & \SOMET{\ex{appliesFor}^-} &\ISA& \ex{JobPosition}\\
      \SOMET{\ex{suitableFor}} &\ISA& \ex{User}
      & \SOMET{\ex{suitableFor}^-} &\ISA& \ex{JobPosition}\\
      \SOMET{\ex{suitableFor}} &\ISA& \ex{PositivelyEvaluated}
      & \ex{EligibleUser} &\ISA& \ex{User}\\
      \ex{User} \AND \ex{Graduate} &\ISA& \ex{EligibleUser}
      & \ex{EligibleUser} &\ISA& \ex{Graduate}
    \end{array}
  \]
  while $\Amc$, which stores data on available job positions, contains the
  assertions
  \[
    \begin{array}{l}
      \ex{AcademicPosition}(\ex{professor}_{\ex{123}}),\\
      \ex{AcademicPosition}(\ex{researcher}_{\ex{123}}),
    \end{array}
    \qquad
    \begin{array}{l}
      \ex{AdminPosition}(\ex{secretary}_{\ex{123}}),\\
      \ex{AdminPosition}(\ex{secretary}_{\ex{456}}).
    \end{array}
  \]
  Moreover, we assume that $\Amc$ contains all the assertions of the form
  $\lnot A(\ex{u})$, $\lnot P(\ex{u}, a)$ and $\lnot P(a, \ex{u})$, for a
  distinguished individual name $\ex{u} \in \NI$ and every
  $A, P, a \in \sig{\Omc}$, so that $\ex{u}$ can be used to represent an
  \emph{undefined value}.
  The CIs of $\Tmc$ formalise the following facts:
  \begin{inparablank}
  \item there are both academic and administrative job positions and these are
    disjoint;
  \item users and job positions are disjoint;
  \item \ex{appliesFor} relates users to job positions;
  \item to be suitable for
    something one has to be a user that is positively evaluated;
  \item the range of \ex{suitableFor} is included in the extension of
    \ex{JobPositions};
  \item an eligible user is defined as a graduate user.
  \end{inparablank}
  \envend
\end{example}


We define now the \emph{standard translation} from \ourDL expressions to FO
formulas, which maps concepts to FO formulas with one free variable, and roles
to FO formulas with two free variables.  Specifically, the translation $T$
generates formulas that contain just two variables $x,y\in\Var$, and is defined
as follows:
\[
  \begin{array}{rcl@{\qquad\qquad\qquad}rcl}
    \sttr{A_1\AND\cdots\AND A_n} &=&
    \multicolumn{4}{l}{A_1(x)\land\cdots\land A_n(x),} \\
    \sttr{P} &=& P(x,y),
    &  \sttr{P^-} &=& P(y,x),\\
    \sttr{\SOMET{R}} &=& \exists y \sttr{R},
    & \sttr{\SOME{R}{A}} &=& \exists y (\sttr{R} \land A(y)),\\
    \sttr{\NOT A} &=& \lnot \sttr{A},
    & \sttr{\NOT R} &=& \lnot \sttr{R},
  \end{array}
\]
where $A,A_1,\ldots,A_n$ are unary predicates and $P$ is a binary predicate.
Moreover, we map CIs and RIs into universal FO sentences in the following way:
\[
  \sttr{C \ISA D} = \forall x (\sttr{C} \to \sttr{D}), \qquad\qquad
  \sttr{R \ISA S} = \forall x \forall y (\sttr{R} \to \sttr{S}),
\]
where $D$ stands for either $A$ or $\NOT A$, and $S$ stands for either $R'$ or
$\NOT R'$.
We also set
$\sttr{\Tmc} = \bigcup_{\beta \in \Tmc} \{ \sttr{\beta} \}$.
Assertions $\alpha$ are (identically) mapped into FO literals without free
variables (i.e., \emph{ground}), as $\sttr{\alpha} = \alpha$, and we set
$\sttr{\Amc} = \bigcup_{\alpha \in \Amc} \{ \sttr{\alpha} \}$.
Finally, $\sttr{\Omc} = \sttr{\Tmc} \cup \sttr{\Amc}$.
It is easy to see that the set of FO sentences obtained as the translation
$\sttr{\Omc}$ of an \ourDL ontology $\Omc$, can be equivalently rewritten into
a \emph{universal Horn theory}~\cite{KonEtAl14,Hod93}.
Such a theory, which we identify with $T(\Omc)$, can be obtained from $T(\Omc)$
by simply putting formulas into prenex normal form.
%


The semantics for \ourDL expressions can be given in terms of their FO
translation~\cite{KonEtAl14}.
For an interpretation $\Imc=(\dom,\cdot^{\Imc})$ and a concept $C$, we define
the \emph{extension} of $C$ in $\Imc$ as
$\Int{C}=\{ d \in \dom \mid \Imc \models \sttr{C}[d] \}$.  Similarly, for a
role $R$, we define its extension in $\Imc$ as
$\Int{R} = \{ (d,e) \in \dom\times\dom \mid \Imc \models \sttr{R}[d,e] \}$.  We
say that $C$ and $R$ are \emph{satisfied} in $\Imc$ if $\Int{C}\neq\emptyset$
and $\Int{R}\neq \emptyset$, respectively.

Moreover, given a CI, RI, assertion, TBox, ABox, or ontology $\Gamma$, we say
that $\Gamma$ is \emph{satisfied} in $\Imc$ (or that $\Imc$ is a \emph{model}
of $\Gamma$), written $\Imc \models \Gamma$, iff $\Imc \models \sttr{\Gamma}$.
Given an ontology $\Omc$ and (a concept, role, CI, RI, or assertion mapped, via
its FO translation, into) an FO formula $\p$, we say that $\p$ is
\emph{satisfiable w.r.t.\ $\Omc$} (or \emph{$\Omc$-satisfiable}) if there
exists a model $\Imc$ of $\Omc$ that satisfies $\p$ under some assignment in
$\Imc$.
Finally, we say that $\Omc$ \emph{logically implies} an FO formula $\p$, or
that $\p$ is a \emph{logical consequence} of $\Omc$, written $\Omc \models \p$,
if, for every model $\Imc$ of $\Omc$ and every assignment $\assign$ in $\Imc$,
we have that $\Imc$ satisfies $\p$ under $\assign$.


\section{Basic Model-Theoretic Properties}
\label{sec:firstres}

In this section, we prove the model-theoretic properties that will be used
later on to develop our verification machinery.
Specifically, we show here that the standard
translation of
the \ourDL ontologies introduced in the previous section
admits model completion,
and has the constraint satisfiability problem decidable.
These properties will allow us, in the subsequent sections, to verify
suitably defined
DL-based data-aware processes by employing a variant of the SMT-based backward reachability procedure introduced in \cite{CalEtAl19}.
To present our results, we
require the following preliminary notions.

A formula that is a conjunction of $\Sigma$-literals is called a \emph{$\Sigma$-constraint}.
Given a $\Sigma$-theory $T$, we define the \emph{constraint satisfiability problem for $T$} as follows:
given a formula $\exists \underline{y} \p(\underline{x}, \underline{y})$, where $\p(\underline{x}, \underline{y})$ is a $\Sigma$-constraint, decide whether
$\exists \underline{y} \p(\underline{x}, \underline{y})$ is satisfiable w.r.t. $T$.
%
%
A theory $T$ has \emph{quantifier elimination} iff, for every $\sig{T}$-formula $\p(\underline{x})$, there exists a quantifier-free formula $\psi(\underline{x})$ such that $T \models \p(\underline{x}) \leftrightarrow \psi(\underline{x})$.
%
Finally, we will use the following definition of model completion, which is restricted to cover the case of universal theories (the ones considered in this work) and that is nonetheless known to be equivalent (for universal theories) to the usual one from model theory~\cite{ChaKei90,Ghi04,CalEtAl19}.
Let $T$ be a universal $\Sigma$-theory and let $T^{\ast} \supseteq T$ be a
further $\Sigma$-theory. We say that $T^{\ast}$ is a \emph{model completion of
 $T$} iff:
\begin{inparaenum}[\itshape (i)]
\item every $\Sigma$-constraint satisfied in some model of $T$ is also
  satisfied in some model of $T^{\ast}$;
\item $T^{\ast}$ has quantifier elimination.
\end{inparaenum}

We now state the main technical result of the section.


\begin{restatable}{theorem}{rldhlass}
\label{prop:rldhlass}
Given an \ourDL ontology $\Omc$, $\sttr{\Omc}$ is a finite universal FO theory
that
\begin{inparaenum}[\itshape (i)]
\item has a decidable constraint satisfiability problem, and
\item admits a model completion.
\end{inparaenum}
\end{restatable}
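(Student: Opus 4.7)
The plan is to verify the three sub-claims in order of increasing difficulty, with the existence of the model completion being the main obstacle.

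\emph{Finite universality and decidable constraint satisfiability.} Universality is immediate from inspection of $T(\cdot)$: each CI $C \ISA A$ (or $C \ISA \NOT A$) translates, after pushing any existential quantifier occurring in a left-hand concept ($\SOMET{R}$ or $\SOME{R}{A}$) to the outside via the equivalence $(\exists y\,\p) \to \q \equiv \forall y\,(\p \to \q)$, to a prenex universal formula; role inclusions translate analogously; and ABox assertions are ground literals, hence trivially universal. Finiteness is inherited from $\Omc = (\Tmc, \Amc)$. For part~(i), the crucial syntactic feature is the absence of existential quantification on the right-hand side of \ourDL inclusions: $\sttr{\Tmc}$ is a set of universal rules whose heads are (negated) atoms, with no Skolem functions that would introduce fresh elements. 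To decide whether $\exists \uy\,\p(\ux,\uy)$ is $\sttr{\Omc}$-satisfiable, I treat $\ux, \uy$ as fresh individual names, add the resulting ground literals to $\sttr{\Amc}$, and forward-chain with $\sttr{\Tmc}$ until saturation. Termination is immediate (the derivable ground atoms are bounded by the Herbrand base over the finite expanded signature), and satisfiability reduces to the absence of a complementary pair of literals in the saturation.

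\emph{Amalgamation property for $\sttr{\Omc}$.} The claim in~(ii) is proved in two stages, the first and main being to establish the \emph{amalgamation property} (AP) for $\mathrm{Mod}(\sttr{\Omc})$. Given two models $\Imc', \Imc'' \models \sttr{\Omc}$ sharing a common substructure $\Imc_0$, I construct the pushout amalgam $\Imc$ with domain $\Delta^{\Imc'} \cup_{\Delta^{\Imc_0}} \Delta^{\Imc''}$, interpreting each concept and role as the componentwise union of its $\Imc'$- and $\Imc''$-interpretations; crucially, no role edge is added between the two non-shared components. Checking that $\Imc \models \sttr{\Omc}$ then hinges on the following locality property: whenever the body of a universal axiom in $\sttr{\Tmc}$ is realized at a tuple of $\Imc$, the tuple must lie entirely in $\Delta^{\Imc'}$ or entirely in $\Delta^{\Imc''}$, so the head holds in the corresponding submodel and transfers back to $\Imc$. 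The subtle case is that of inclusions with a negated head $\NOT A$ when the head's free variable resides in $\Delta^{\Imc_0}$ but the body-witness lies on only one side; there the argument invokes submodel coherence $A^{\Imc'} \cap \Delta^{\Imc_0} = A^{\Imc_0} = A^{\Imc''} \cap \Delta^{\Imc_0}$ to transfer the negative literal across sides.

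\emph{From AP to model completion.} Finally, I invoke the classical result, as instantiated in the SMT-based framework of~\cite{Ghi04,CalEtAl19}, stating that a finite universal first-order theory enjoying the amalgamation property together with decidable constraint satisfiability admits a model completion. Combined with~(i) and the AP just shown, this yields the desired model completion of $\sttr{\Omc}$ and completes the proof.
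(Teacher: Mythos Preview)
Your proposal is correct and follows essentially the same strategy as the paper: for~(ii) you establish the amalgamation property via the union construction and then invoke the standard criterion (universal theory over a relational signature with AP admits a model completion), which is exactly what the paper does, and your treatment of the ``subtle case'' with negated heads is in fact the content of the paper's observation that the inclusions $i_1,i_2$ are embeddings (hence preserve literals). Two minor deviations: for~(i) the paper reduces to $\mathcal{ALCHI}$ ontology satisfiability rather than arguing termination of forward chaining directly, and the criterion you cite for model completion does not actually need decidable constraint satisfiability as a hypothesis---the paper's version assumes only a function-free signature, universality, and AP.
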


\begin{proof}[Proof (Sketch)]
To prove Point~\textit{(i)}, we reduce to $\ourDL$ (seen as a fragment of
$\mathcal{ALCHI}$,~\cite{Rud11}) ontology satisfiability. For Point~\textit{(ii)}, since
there is no function symbol in $\Sigma_{T(\Omc)}$, it is sufficient to show
that $T(\Omc)$ enjoys the amalgamation property: this is proved by explicitly
constructing a $T(\Omc)$-amalgam for every pair of models $\Imc_{1}$ and
$\Imc_{2}$ of $T(\Omc)$ sharing a submodel $\Imc_0$. 
 See the appendix for details.
\end{proof}

%

\begin{remark}
\label{rem:qe}
For every \ourDL ontology $\Omc$,
the model completion $T(\Omc)^{\ast}$ of $T(\Omc)$ admits quantifier
elimination. The algorithm for quantifier elimination in $T(\Omc)^{\ast}$
follows from the proof of Theorem~\ref{prop:rldhlass}:
to eliminate $\exists x$ from a $\Sigma_{T(\Omc)}$-formula
$\exists x \varphi(x, \uy)$, it is sufficient to take the conjunction of the
clauses $\chi(\uy)$ implied by $\varphi(x, \uy)$, which are finitely many for
$T(\Omc)$, up to $T(\Omc)$-equivalence.
This procedure is used in Algorithm~\ref{fig:algorithm} below and is
crucial to get the decidability results of Theorem~\ref{prop:dlsasdec}.
\envend
\end{remark}

Properties~\textit{(i)} and~\textit{(ii)} from Theorem~\ref{prop:rldhlass}
are in line with the foundational framework of~\cite{CalEtAl19,CalEtAl20},
where a third property is additionally assumed: the \emph{finite model property
 for constraint satisfiability} (see the references for the definition).
However, differently from~\cite{CalEtAl19,CalEtAl20}, this property is not
needed anymore for the results of our paper.
This is an important difference from~\cite{CalEtAl19,CalEtAl20}, since the
artifact systems studied there require to interact with \emph{finite
 structures} (i.e., databases), whereas in the context of the present work we
admit that the models of the knowledge base of our artifact systems can be
infinite.

\section{Ontology-Based Data-Aware Processes}
\label{sec:dlsas}

In this section, we present our main contributions. We first define our model,
called \ourDL-based simple artifact systems, or \ourDL-SASs for short, to
formalise data-aware processes under \ourDL ontologies.  These systems are a
variant of the artifact-centric systems studied in~\cite{CalEtAl19}.
\ourDL-SASs read data from a given \ourDL ontology, used to store background
information of the system, and manipulate individual variables, called
\emph{artifact variables}, which represent the current state of the process.
We then study the parameterised safety problems of such models by adopting a
symbolic version~\cite{lmcs2010,CalEtAl20} of the well-known backward reachability procedure~\cite{AbdullaCJT96}.

\subsection{DL-Based Simple Artifact Systems}

We first require the following preliminary notions.  For an \ourDL ontology
$\Omc$, an \emph{$\Omc$-partition} is a finite set
$P = \{ \kappa_{1}(\ux), \ldots, \kappa_{n}(\ux) \}$ of $\sig{\Omc}$-literals
such that
$\Omc \models \forall \ux \bigvee^{n}_{i = 1} \kappa_{i}(\ux) \land \forall \ux
\bigwedge_{i \neq j} \lnot ( \kappa_{i}(\ux) \land \kappa_{j}(\ux) )$.
%
Given an ontology $\Omc$, an $\Omc$-partition $P = \{ \kappa_{1}(\ux), \ldots, \kappa_{n}(\ux) \}$, and
$\sig{\Omc}$-terms $\ut(\ux) = (t_{1}(\ux), \ldots, t_{n}(\ux) )$, (the value
of) a \emph{case-defined function $F$ based on $P$ and $\ut$}, for a fresh
function symbol $F \in \NF$,
is defined as follows: for every model $\Imc$ of $\Omc$, every assignment
$\assign$ in $\Imc$, and every tuple $\ux$ of variables,
$\assign(F(\ux)) = \assign(t_{i}(\ux))$, if $\Imc \models^{\assign} \kappa_{i}(\ux)$.

In order to introduce verification problems in a symbolic setting, one first
has to specify which formulas are used to represent
\begin{inparaenum}[\itshape (i)]
\item the sets of states,
\item the system initialisations, and
\item the system evolution.
\end{inparaenum}
To capture these aspects, we provide the following definitions.

An \emph{\ourDL-based simple artifact system (\ourDL-SAS)} is a tuple
\[
  \textstyle
  \Smc = (\Omc, \ux, \iota(\ux), \bigcup_{j = 1}^{m} \{ \tau_{j} (\ux, \ux') \}),
\]
where $m \in \mathbb{N}$, and
\begin{itemize}
\item $\Omc = (\Tmc, \Amc)$ is an \ourDL ontology;
\item $\ux = (x_{1}, \ldots, x_{n})$ is a tuple of variables, called
  \emph{artifact variables}, and $\ux'$ is a tuple of variables that are
  renamed copies of variables in $\ux$;
\item $\iota(\ux) = \bigwedge_{i = 1}^{n} x_{i}  = a_{i}$, with $a_{i} \in \NI$,
  is an \emph{initial state} formula;
\item $\tau_{j} (\ux, \ux') = \exists \uy (\gamma^{j}(\ux, \uy) \land
  \bigwedge_{i = 1}^{n} x'_{i}  = F^{j}_{i}(\ux, \uy))$,
  for $1 \leq j \leq m$, is a \emph{transition formula}, where $\gamma^{j}(\ux,
  \uy)$ is a
  conjunction of $\sig{\Omc}$-literals
  called \emph{guard} of $\tau_{j}$, and $x'_{i} = F^{j}_{i}(\ux, \uy)$, where
  each $F^{j}_{i}$ is a case-defined function based on some $\Omc$-partition
  and list of $\sig{\Omc}$-terms, is an \emph{update} of $\tau_{j}$.
\end{itemize}

Given an \ourDL ontology $\Omc$, we call \emph{state ($\Sigma_{\Omc}$-)formula}
a quantifier-free $\Sigma_{\Omc}$-formula $\p(\ux)$. A state formula constrains
the content of the artifact variables characterising the current states of the
systems. Notice that a state formula can represent a (possibly infinite) set of
states, because of the presence of (possibly infinitely many) different
elements in a model of the ontology $\Omc$.
%
A \emph{safety formula} for $\Smc$ is a state $\Sigma_{\Omc}$-formula $\nu(\ux)$, used to describe the undesired states of the system.
We say that $\Smc$ is \emph{safe w.r.t.\ $\nu(\ux)$} if there does
not exist $k \geq 0$ and a formula of the form
\begin{equation}
  \tag{$\star$}
  \label{eq:safetyform}
  \iota(\ux^{0}) \land \tau_{j_{0}}(\ux^{0}, \ux^{1}) \land \cdots \land \tau_{j_{k-1}}(\ux^{k-1}, \ux^{k}) \land \nu(\ux^{k}),
\end{equation}
that is satisfiable w.r.t.\ $\Omc$, where
$1 \leq j_{0}, \ldots, j_{k-1} \leq m$ and each $\ux^h$, with
$0 \leq h \leq k$, is a tuple of variables that are renamed copies of variables
in $\ux$.
The \emph{safety problem for $\Smc$} is the following decision problem: given a
safety formula $\nu(\ux)$ for $\Smc$, decide whether $\Smc$ is safe w.r.t.\
$\nu(\ux)$.
This verification problem is \emph{parametric} on the models of a fixed \ourDL
ontology, since safety is assessed \emph{irrespectively of the choice of such a
 model}. This implies that, when the system is safe, it is so for \emph{every}
execution of the process under \emph{every} possible model (which in principle
are infinitely many) of the given ontology.

\begin{example}
\label{ex:sas}
We develop a simplified job hiring process for university personnel based on
the domain knowledge formalised in Example~\ref{ex:ont}. Each application is
created using a dedicated website portal, where users that are potentially
interested in applying need to register in advance. When a registered user
decides to apply, the data created by this single application do not have to be
stored persistently and thus can be maintained just by using artifact variables
(described below) that can interact with the knowledge base.  All these
variables are initialised with an undefined value $\ex{u}$. In the first
transition of the system, an application is created by a registered user, which
falls into
the extension of the concept \ex{User}: the information about this user is then
stored in the artifact variable $x_{\ex{applicant}}$. At this point, the
application website asks the user whether they hold a university degree: in
case of an affermative answer, the website accepts the user as eligible, the
information about the user is stored using $x_{\ex{applicant}}$ and the process
can progress. Then, the user picks up a job position (assigned to
$x_{\ex{job}}$) and applies for it. The following steps of the process involve
the evaluation of the application: for both academic and administrative
positions, if the eligible candidate is suitable for the chosen position, they
are declared winner (assigned to $x_{\ex{winner}}$), otherwise they are
declared loser (assigned to $x_{\ex{loser}}$). To formalise this process, we
define the \ourDL-SAS
$\Smc = (\Omc, \ux, \iota(\ux), \bigcup_{j = 1}^{7} \{ \tau_{j} (\ux, \ux')
\})$ so that:
\begin{itemize}
\item the ontology $\Omc$ is the \ourDL ontology given in Example~\ref{ex:ont};
\item the artifact variables are $\ux = (x_{\ex{applicant}}, x_{\ex{job}}, x_{\ex{eligible}}, x_{\ex{winner}}, x_{\ex{loser}} )$;
\item the initial state formula is

  ~~$
    \iota = ( x_{\ex{applicant}} = \ex{u}) \land (x_{\ex{job}} = \ex{u}) \land (x_{\ex{eligible}} = \ex{u}) \land (x_{\ex{winner}} = \ex{u}) \land (x_{\ex{loser}} = \ex{u} );
  $
\item the transition formulas are

  $\begin{array}{rcl}
     \tau_1 &=& \exists y_1 (\ex{User}(y_1) \land x'_{\ex{applicant}} = y_1),\\
     \tau_2 &=& \ex{EligibleUser}(x_{\ex{applicant}}) \land x'_{\ex{eligible}} = x_{\ex{applicant}},\\
     \tau_3 &=& \exists z_1 (\ex{JobPosition} (z_1) \land \ex{appliesFor}(x_{\ex{eligible}}, z_1) \land x'_{\ex{job}} = z_1),\\
     \tau_4 &=& \ex{AcademicPosition} (x_{\ex{job}}) \land \ex{suitableFor}(x_{\ex{eligible}}, x_{\ex{job}}) \land x'_{\ex{winner}} = x_{\ex{eligible}},\\
     \tau_5 &=& \ex{AdminPosition} (x_{\ex{job}}) \land \ex{suitableFor}(x_{\ex{eligible}}, x_{\ex{job}}) \land x'_{\ex{winner}} = x_{\ex{eligible}},\\
     \tau_6 &=& \ex{AcademicPosition} (x_{\ex{job}}) \land \lnot\ex{suitableFor}(x_{\ex{eligible}}, x_{\ex{job}}) \land x'_{\ex{loser}} = x_{\ex{eligible}},\\
     \tau_7 &=& \ex{AdminPosition} (x_{\ex{job}}) \land \lnot \ex{suitableFor}(x_{\ex{eligible}}, x_{\ex{job}}) \land x'_{\ex{loser}} = x_{\ex{eligible}}.
   \end{array}$
\end{itemize}
An undesired situation of the system is the one where an applicant registered
user is declared winner even if they were not eligible. This situation is
formally described by the following safety formula for $\Smc$:
\nobrackettag
\[
  \nu = \ex{User}(x_{\ex{winner}}) \land \lnot\ex{EligibleUser}(x_{\ex{winner}}).
  \tag{\envendmark}
\]
\end{example}

\subsection{Backward Search for \ourDL-SASs}

\begin{algorithm}[t]
\label{alg:backsearch}
\SetKwProg{Fn}{Function}{}{end}
\Fn{$\mathsf{BReach}(\nu)$}{
\setcounter{AlgoLine}{0}
\ShowLn$\phi\longleftarrow \nu$;  $B\longleftarrow \bot$\;
\ShowLn\While{$\phi\land \neg B$ is $T(\Omc)$-satisfiable}{
\ShowLn\If{$\iota\land \phi$ is $T(\Omc)$-satisfiable}
{\textbf{return}  ($\mathsf{unsafe}$, $\textit{unsafe trace of form~\eqref{eq:safetyform}}$);}
\setcounter{AlgoLine}{3}
\ShowLn$B\longleftarrow \phi\vee B$\;
\ShowLn$\phi\longleftarrow \Pre(\tau, \phi)$\;
\ShowLn$\phi\longleftarrow \QE(T(\Omc)^{\ast},\phi)$\;
}
\textbf{return} $\mathsf{safe}$;}{
\caption{SMT-based backward reachability procedure}
\label{fig:algorithm}
}
\end{algorithm}

Algorithm~\ref{alg:backsearch}
shows the \emph{SMT-based backward reachability procedure} (or
\emph{backward search}) for handling the safety problem for an \ourDL-SAS $\Smc$.  An integral
part of the algorithm is to compute \textit{symbolic} preimages (Line~5). The intuition behind the algorithm is to execute a loop where, starting from the undesired states of the system (described by the safety formula $\nu(\ux)$), the state space of the system is explored \emph{backward}: in every iteration of the while loop (Line~2), the current set of states is \emph{regressed} through transitions thanks to the preimage computation.
For that purpose,  for any $\tau(\uz,\uz')$ and $\phi(\uz)$ (where $\uz'$ are renamed copies of $\uz$),
we define
$\tau:= \bigvee_{h=1}^{m} \tau_h$ and  $\textit{Pre}(\tau,\phi)$ as the formula
$\exists \uz'(\tau(\uz, \uz')\land \phi(\uz'))$.
Let $\phi(\ux)$ be a state formula, describing the state of the artifact variables $\ux$.
The \emph{preimage} of the set of states described by the formula
$\phi(\ux)$ is the set of states described by
$\textit{Pre}(\tau,\phi)$ (notice that, when $\tau=\bigvee\hat\tau$,
        then $\textit{Pre}(\tau,\phi)=\bigvee\textit{Pre}(\hat\tau,\phi)$). We recall that a state formula is a quantifier-free $\Sigma_{\Omc}$-formula. Unfortunately, because of the presence of the existentially quantified variables $\uy$ in $\tau$, $\textit{Pre}(\tau,\phi)$ is \emph{not} a state formula, in general.
As stated in~\cite{CalEtAl19,CalEtAl20}, if the quantified variables were not
\emph{eliminated},
we would break the \emph{regressability} of the procedure: indeed, the states
reached by computing preimages, intuitively described by
$\textit{Pre}(\tau,\phi)$, need to be represented by a state formula $\phi'$ in
the new iteration of the while loop. In addition, the increase of the number of
variables due to the iteration of
the preimage computation would affect the performance of the satisfiability
tests described below, in case the loop is executed many times. In order to
solve these issues, it is essential to introduce the subprocedure
$\QE(T(\Omc)^{\ast},\phi)$ in Line~6.

$\QE(T(\Omc)^{\ast},\phi)$ in Line~6 is a subprocedure that implements the
quantifier elimination algorithm of $T(\Omc)^{\ast}$ and that converts the
preimage $\Pre(\tau,\phi)$ of a state formula $\phi$ into a state formula
(equivalent to it modulo the
axioms of $T(\Omc)^{\ast}$), so as to guarantee the regressability of the
procedure: this conversion is possible since $T(\Omc)^{\ast}$ eliminates from
$\tau_h$ the existentially quantified variables $\uy$.
Backward search computes iterated preimages of the safety formula $\nu$,
until a fixpoint is reached (in that case, $\Smc$ is \emph{safe} w.r.t.\ $\nu$)
or until a set intersecting the initial states (i.e., satisfying $\iota$) is
found (in that case, $\Smc$ is \emph{unsafe} w.r.t.\ $\nu$). 
%
%
\textit{Inclusion} (Line~2) and \textit{disjointness}
(Line~3) tests can be discharged via proof obligations to be handled by SMT
solvers.
The fixpoint is reached when the test in Line~2 returns \textit{unsat}: the
preimage of the set of the current states is included in the set of states
reached by the backward search so far (represented as the iterated application
of preimages to the safety formula $\nu$).
The test at Line~3 is satisfiable when the states visited so far by the
backward search includes a possible initial state (i.e., a state satisfying
$\iota$). If this is the case, then $\Smc$ is unsafe w.r.t.\ $\nu$. Together
with the unsafe outcome, the algorithm also returns an unsafe trace of the
form~\eqref{eq:safetyform}, explicitly witnessing the sequence of transitions
$\tau_h$ that, starting from the initial configurations, lead the system to a
set of states satisfying the undesired conditions described by $\nu(\ux)$.
%
%

\begin{restatable}{theorem}{soundcomplete}
\label{thm:soundcomplete}
Backward search
(Algorithm~\ref{alg:backsearch})
is correct for detecting whether an \ourDL-SAS $\Smc$ is safe w.r.t.\ $\nu(\ux)$.
\end{restatable}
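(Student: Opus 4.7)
The plan is to establish partial correctness of Algorithm~\ref{alg:backsearch} by a loop-invariant argument, relying crucially on the model-theoretic properties of $T(\Omc)$ established in Theorem~\ref{prop:rldhlass}: namely, that $T(\Omc)$ admits a model completion $T(\Omc)^{\ast}$ enjoying quantifier elimination (and hence yielding an effective $\QE$ subroutine, cf.\ Remark~\ref{rem:qe}), and that the constraint satisfiability problem for $T(\Omc)$ is decidable. I would not attempt to prove termination (which is a separate question), but only that whenever the algorithm returns an answer, that answer faithfully reflects safety of $\Smc$ w.r.t.\ $\nu$.

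First, I would fix notation and define, for each $k \geq 0$, the iterated preimage $\Pre^{k}(\tau, \nu)$ by $\Pre^{0}(\tau, \nu) := \nu$ and $\Pre^{k+1}(\tau, \nu) := \Pre(\tau, \Pre^{k}(\tau, \nu))$. Then I would prove by induction on the iteration counter the following loop invariant: at the top of the $k$-th iteration of the while loop, $\phi$ is $T(\Omc)^{\ast}$-equivalent to a quantifier-free state formula encoding $\Pre^{k}(\tau, \nu)$, and $B$ is $T(\Omc)^{\ast}$-equivalent to $\bigvee_{i=0}^{k-1} \Pre^{i}(\tau, \nu)$. The inductive step uses the definition of $\Pre$ together with the correctness of the $\QE(T(\Omc)^{\ast},\cdot)$ subroutine: after computing $\Pre(\tau, \phi)$ in Line~5, the existential quantifiers over $\uy$ (and the primed copies $\ux'$) are eliminated modulo $T(\Omc)^{\ast}$ in Line~6, producing an equivalent state formula, which preserves regressability for the next iteration.

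Next I would handle the two exit cases. If the algorithm returns $\mathsf{unsafe}$ at some iteration $k$, the test in Line~3 certifies that $\iota(\ux^{0}) \land \Pre^{k}(\tau, \nu)(\ux^{0})$ is $T(\Omc)$-satisfiable; unfolding the definition of iterated preimage and introducing the auxiliary copies $\ux^{1}, \ldots, \ux^{k}$ yields a satisfiable instance of the error formula~\eqref{eq:safetyform}, i.e.\ a genuine unsafe trace. Conversely, if the algorithm returns $\mathsf{safe}$, the test in Line~2 has failed, meaning that $\Pre^{k}(\tau, \nu) \to \bigvee_{i=0}^{k-1} \Pre^{i}(\tau, \nu)$ is $T(\Omc)^{\ast}$-valid. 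A standard fixpoint argument then shows that for every $\ell \geq k$, the formula $\Pre^{\ell}(\tau, \nu)$ is $T(\Omc)^{\ast}$-implied by $\bigvee_{i=0}^{k-1} \Pre^{i}(\tau, \nu)$; since $\iota$ was separately tested disjoint from each $\Pre^{i}(\tau, \nu)$ for $i < k$, no formula of the form~\eqref{eq:safetyform} is $T(\Omc)$-satisfiable, hence $\Smc$ is safe w.r.t.\ $\nu$.

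The main subtlety — and the step I would be most careful with — is the interplay between $T(\Omc)$ and $T(\Omc)^{\ast}$: quantifier elimination is performed modulo the model completion $T(\Omc)^{\ast}$, but the satisfiability checks in Lines~2 and~3 (and the semantic definition of safety) refer to $T(\Omc)$ itself. The bridge is the defining property of model completions applied to the quantifier-free state formulas obtained after $\QE$: a $\Sigma_{T(\Omc)}$-constraint is satisfiable in some model of $T(\Omc)$ iff it is satisfiable in some model of $T(\Omc)^{\ast}$ (one direction is immediate since $T(\Omc)^{\ast} \supseteq T(\Omc)$; the other is point~\textit{(i)} of the definition of model completion). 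Thus the SMT checks executed against $T(\Omc)$ correctly classify the state formulas manipulated by the algorithm, and the invariant together with the two exit analyses yield the claim.
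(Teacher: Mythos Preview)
Your proposal is correct and follows essentially the same approach as the paper: a loop invariant tracking that $\phi$ encodes the $k$-th iterated preimage and $B$ the disjunction of the preceding ones, the $T(\Omc)$/$T(\Omc)^{\ast}$ bridge via the defining properties of model completion, and separate analyses of the $\mathsf{unsafe}$ and $\mathsf{safe}$ exits (the latter by the same fixpoint argument). The only step you omit that the paper makes explicit is a preliminary reduction (Lemma~\ref{lem:nocasedeffunctsas}) eliminating the case-defined function symbols $F^{j}_{i}$ from the transitions, so that every formula fed to $\QE(T(\Omc)^{\ast},\cdot)$ is genuinely a $\Sigma_{T(\Omc)}$-formula; this is a routine preprocessing step and does not change the structure of your argument.
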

\begin{proof}[Proof (Sketch)]
First, we require the following claim, which follows immediately from the
definitions.

\begin{claimn}
  \label{cla:tracesat}
  For every safety formula $\nu(\ux)$ for $\Smc$ and every $k \geq 0$, a
  formula $\vartheta$ of the form~\eqref{eq:safetyform} is satisfiable w.r.t.\
  $\Omc$ iff $\vartheta$ is satisfiable w.r.t.\ $T(\Omc)$.
\end{claimn}
Then, we need to show that, instead of considering satisfiability of formulas
of the form~\eqref{eq:safetyform} in models of $T(\Omc)$, we can concentrate on
satisfiability w.r.t.\ $T(\Omc)^{\ast}$ ($T(\Omc)^{\ast}$ exists thanks to
Property~\textit{(ii)} of Theorem~\ref{prop:rldhlass}).
Then, by exploiting the
algorithm for quantifier elimination in $T(\Omc)^{\ast}$ described in
Remark~\ref{rem:qe}, formulas of the form~\eqref{eq:safetyform} can be
represented via backward search by using quantifier-free formulas. We finally
conclude by noticing that safety\slash unsafety of $\Smc$ w.r.t $\nu(\ux)$ can
be now detected invoking the satisfiability tests (which are effective thanks
to Property~\textit{(i)} of Theorem~\ref{prop:rldhlass}) over those
quantifier-free formulae.
\end{proof}

Backward search for generic artifact systems is not guaranteed to terminate~\cite{CalEtAl20}.
However,
in case $\Smc$ is \emph{unsafe} w.r.t.\ $\nu(\ux)$, an unsafe trace---which is finite---is found after finitely many iterations of the while loop: hence, in the unsafe case, backward search
must terminate.
Together with the theorem above, this
means that the backward reachability procedure is at least a semi-decision procedure for detecting  unsafety of \ourDL-SASs.
Nevertheless,
we show in the following theorem that, in case of
\ourDL-SASs,
backward search \emph{always terminates}:
thus,
it is a full decision procedure,
for which we also
provide a $\PSpace$
upper bound.

\begin{restatable}{theorem}{dlsasdec}
\label{prop:dlsasdec}
For an \ourDL ontology $\Omc$
and an \ourDL-SAS $\Smc = (\Omc, \ux, \iota(\ux), \bigcup_{j = 1}^{m} \{ \tau_{j} (\ux, \ux') \})$,
the safety problem for $\Smc$ is decidable in $\PSpace$ in the combined size of $\ux$, $\iota(\ux)$ and $\bigcup_{j = 1}^{m} \{ \tau_{j} (\ux, \ux') \}$.
\end{restatable}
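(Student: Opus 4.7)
The plan is to prove Theorem~\ref{prop:dlsasdec} in two parts: first establish termination of Algorithm~\ref{fig:algorithm} on every \ourDL-SAS, and then upgrade this to a \PSpace{} bound via a nondeterministic symbolic reachability argument combined with Savitch's theorem.

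For termination, I will argue that the state formulas produced by $\mathsf{QE}(T(\Omc)^{\ast},\cdot)$ at Line~6 are quantifier-free $\Sigma_{\Omc}$-formulas whose atoms are drawn from a finite, polynomially-sized set $\Pi$ of atomic formulas built from the predicates and individuals occurring in $\Smc$ and $\nu$, applied to the variables in $\ux$ or to those individuals (cf.\ Remark~\ref{rem:qe}). The accumulator $B$ at Line~4, growing monotonically in the finite lattice of quantifier-free formulas over $\Pi$ modulo $T(\Omc)^{\ast}$-equivalence, must eventually stabilise, causing the entailment test on Line~2 to fail.

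For the \PSpace{} bound I do not run Algorithm~\ref{fig:algorithm} directly---the accumulator $B$ can in principle grow doubly exponentially---but instead appeal to Theorem~\ref{thm:soundcomplete} and Claim~\ref{cla:tracesat} to reformulate unsafety as the existence of a $T(\Omc)$-satisfiable trace of the form~\eqref{eq:safetyform}. Passing to the model completion $T(\Omc)^{\ast}$ given by Theorem~\ref{prop:rldhlass}\textit{(ii)}, its quantifier elimination turns the enabling of transitions and the compatibility of successive artifact tuples into quantifier-free conditions on the complete $\Pi$-type of the current $\ux$. A pigeonhole argument on types then yields that whenever a trace of some length $k$ is $T(\Omc)$-satisfiable, one of length at most $2^{|\Pi|}$ is too: if two positions $i<j$ realise the same $\Pi$-type, the intermediate segment can be excised. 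Since $|\Pi|$ is polynomial in the input size, this length is expressible with a polynomial-length binary counter and each type is representable in polynomial space. A nondeterministic algorithm then guesses transitions $\tau_{j_0},\ldots,\tau_{j_{k-1}}$ step by step, storing only the current type $t_i(\ux)$ and its successor $t_{i+1}(\ux)$, verifying each step via the $T(\Omc)$-constraint-satisfiability decision procedure of Theorem~\ref{prop:rldhlass}\textit{(i)} on a polynomial-size formula, and finally checking that $t_0 \models \iota$ and $t_k \models \nu$. Savitch's theorem converts the resulting \NPSpace{} algorithm into \PSpace.

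The main obstacle will be justifying the pigeonhole contraction of traces: enabling a transition $\tau_j = \exists \uy (\gamma^j \wedge \cdots)$ depends on the existence of witnesses $\uy$ in the ambient model, hence \emph{a priori} on more than the quantifier-free type of $\ux$. I plan to rely on the amalgamation property of $T(\Omc)$ established in the proof of Theorem~\ref{prop:rldhlass}\textit{(ii)}: two models of $T(\Omc)^{\ast}$ that share a tuple realising the same type can be amalgamated into a single model, in which the contracted trace is witnessed.
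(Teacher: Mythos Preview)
Your proposal is correct and is essentially the same strategy as the paper's: termination by the finiteness of quantifier-free $\Sigma_{\Omc}$-formulas over the fixed tuple $\ux$, and the \PSpace bound by a nondeterministic walk over complete literal types of length at most $2^{|\Pi|}$, followed by Savitch's theorem.

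The only differences are in packaging. The paper phrases the nondeterministic procedure as a direct modification of Algorithm~\ref{fig:algorithm} (working \emph{backward} from $\nu$: Line~6 is replaced by a nondeterministically chosen conjunction of literals implying $\QE(T(\Omc)^{\ast},\phi)$, and the loop is bounded by an exponential counter), and it inherits soundness from Theorem~\ref{thm:soundcomplete} rather than re-arguing it; it also explicitly invokes a preprocessing lemma that eliminates case-defined functions with only polynomial blow-up, which you leave implicit. Your forward variant, with soundness argued directly via amalgamation, is equally valid: since $T(\Omc)$ is universal and has no function symbols, a complete $\Pi$-type (including equalities among the $x_i$ and the finitely many individual names) determines the generated substructure up to isomorphism, and existential transition formulas are preserved upward along the embeddings into the $T(\Omc)$-amalgam, so per-step witnesses compose into a single model of the whole trace. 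This is precisely the model-theoretic content that the paper accesses indirectly through the model-completion machinery in Theorem~\ref{thm:soundcomplete}.
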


\begin{proof}[Proof (Sketch)]
For every \ourDL ontology $\Omc$, there are only finitely many quantifier-free
$\Sigma_{T(\Omc)}$-formulas, up to $T(\Omc)$-equivalence, that can be built out
of a finite set of variables $\ux$. Thanks to the availability of the
quantifier elimination procedure $\QE(T(\Omc)^{\ast},\p)$, the overall number
of variables in $\p$ is never increased. This implies that globally there are
only finitely many quantifier-free $\Sigma_{T(\Omc)}$-formulas that
Algorithm~\ref{fig:algorithm} needs to analyse. Hence,
Algorithm~\ref{fig:algorithm} terminates. Concerning complexity, we first note
that the translation
$T(\Omc)$ requires polynomial time. Then, we need to eliminate the occurrences
of case-defined functions (creating an equivalent SAS whose size is polynomial
in the size of the original one), and to modify Algorithm~\ref{fig:algorithm}
by making it nondeterministic with an $\NPSpace$ complexity. The claim follows
by applying Savitch's Theorem.
\end{proof}

We observe that Algorithm~\ref{fig:algorithm} is not yet implemented in the state-of-the-art model checker MCMT (Model Checker Modulo Theories [21]), which is based on SMT solving. Such an implementation, however, can be obtained by extending MCMT with the quantifier elimination algorithm for $T(\Omc)^{\ast}$ (described in Remark~\ref{rem:qe}), required in Line 6, together with a procedure for RDFS+ ontology satisfiability (seen as a fragment of $\mathcal{ALCHI}$,~\cite{Rud11}), required in Lines 2 and 3.

%
%
%

\section{Conclusions}
\label{sec:conc}

We have studied the problem of verification of data-aware processes under
\ourDL ontologies, where the process component can interact with a knowledge
base specified by means of the DL \ourDL,
underpinning the RDFS constructs.
We addressed this problem by introducing a suitable model of DL-based artifact-centric systems, called \ourDL-based SASs, and by leveraging the SMT-based version of the backward reachability procedure, which is a well-known technique to employ for verifying systems of this kind. Specifically, we showed that this procedure is a full decision procedure for detecting safety of \ourDL-based SASs, and we also provided a $\PSpace$ complexity upper bound. 

This work opens several directions for future work.  First, we notice that the
choice of \ourDL ontologies is not intrinsic to our approach.  Indeed,
motivated by conceptual modelling and data integration issues in OBDA
applications, we are currently working on the \emph{\dllite family} of DLs, to
define suitable \dllite-based SASs with analogous decidability and complexity
results.
The main difference we have to account for is that, for a \dllite ontology
$\Omc$, we have an \emph{equisatisfiable} (but not equivalent) translation into
a universal one-variable FO sentence $T(\Omc)$, and Claim~\ref{cla:tracesat} in
the proof of Theorem~\ref{thm:soundcomplete} has to be modified to show that a
trace $\vartheta$ is satisfiable w.r.t.\ $\Omc$ iff a suitably translated trace
$\hat{\vartheta}$ is satisfiable w.r.t.\ $T(\Omc)$.
In general, nonetheless, we point out that \emph{any} DL satisfying the two
conditions stated in Theorem~\ref{prop:rldhlass} can be chosen for our
purposes, and that the same theoretical guarantees can be obtained over the
SMT-based backward reachability procedure.
As future work, we thus intend to introduce a more general framework for
DL-based SASs that is able to account for different DLs.
We also intend to extend the results obtained here to more sophisticated
artifact-centric models, such as the \emph{relational artifact systems}
(\emph{RASs}) studied in~\cite{CalEtAl19,CalEtAl20}.
Moreover, it could be worth investigating in this setting also properties that go beyond safety, such as liveness and fairness.

\section*{Acknowledgements}

This research has been partially supported
by the Wallenberg AI, Autonomous Systems and Software Program (WASP) funded by
the Knut and Alice Wallenberg Foundation,
by the Italian Basic Research (PRIN) project HOPE,
%
by the EU H2020 project INODE
(grant agreement 863410)
by the CHIST-ERA project PACMEL,
%
by the project IDEE (FESR1133) funded by the Eur.\ Reg.\ Development
Fund (ERDF) Investment for Growth and Jobs Programme 2014-2020,
and by the Free University of Bozen-Bolzano through the projects
KGID, 
GeoVKG, 
 STyLoLa,
 VERBA and MENS.


\clearpage

\bibliographystyle{splncs04}
\bibliography{local}

\clearpage

\appendix

\section{Appendix}
\label{sec:app}


\subsection*{Proofs for Section~\ref{sec:firstres}}
\label{sec:fomod}

%
%
%
In the following, given a signature $\Sigma$, we call a \emph{$\Sigma$-interpretation} an interpretation $\Imc = (\Delta^{\Imc}, \cdot^{\Imc})$, where the domain of $\cdot^{\Imc}$ is restricted to $\Sigma$.
Let $\Imc$ and $\Jmc$ be $\Sigma$-interpretations. A \emph{$\Sigma$-homomorphism} (or simply \emph{homomorphism}) from $\Imc$ to $\Jmc$ is a function $\mu \colon \Delta^{\Imc} \longrightarrow \Delta^{\Jmc}$, denoted by $\mu \colon \Imc \longrightarrow \Jmc$, satisfying the following conditions, for every $\underline{d}$ in $\Delta^{\Imc}$:
\begin{enumerate}
	\item[$(1)$] for every individual symbol $a \in \Sigma$, $\mu(a^{\Imc}) = a^{\Jmc}$;
	\item[$(2)$] for every function symbol $f \in \Sigma$, $\mu(f^{\Imc}(\underline{d})) = f^{\Jmc}(\mu(\underline{d}))$;
	\item[$(3)$] for every predicate symbol $P \in \Sigma$, if $\underline{d} \in P^{\Imc}$, then $\mu(\underline{d}) \in P^{\Jmc}$.
\end{enumerate}
We say that a
homomorphism $\mu \colon \Imc \longrightarrow \Jmc$ is an
\emph{embedding} from $\Imc$ to $\Jmc$
if $\mu$ is injective
and such that:
\begin{enumerate}
	\item[$(3')$] for every predicate symbol $P \in \Sigma$, $\underline{d} \in P^{\Imc}$ iff $\mu(\underline{d}) \in P^{\Jmc}$.
\end{enumerate}

We say that $\Imc$ is a \emph{substructure} of $\Jmc$, and that $\Jmc$ is an \emph{extension} of $\Imc$, written $\Imc \subseteq \Jmc$, iff $\Delta^{\Imc} \subseteq \Delta^{\Jmc}$ and the
identity inclusion
$i \colon \Delta^{\Imc} \longrightarrow \Delta^{\Jmc}$ is an embedding from $\Imc$ to $\Jmc$.

A theory $T$ has the \emph{amalgamation property} if, for every pair of embeddings $\mu_{1} \colon \Imc_{0} \longrightarrow  \Imc_{1}$, $\mu_{2} \colon \Imc_{0} \longrightarrow  \Imc_{2}$ between models $\Imc_{0}$ and $\Imc_{1}, \Imc_{2}$ of $T$, there exist a model $\Imc$ of $T$ and embeddings $\nu_{1} \colon \Imc_{1} \longrightarrow \Imc$, $\nu_{2} \colon \Imc_{2} \longrightarrow \Imc$, such that $\nu_{1} \circ \mu_{1} = \nu_{2} \circ \mu_{2}$. The triple $(\Imc, \nu_{1}, \nu_{2})$ (or, with an abuse of notation, just $\Imc$) is called a \emph{$T$-amalgam} of $\Imc_{1}, \Imc_{2}$ over $\Imc_{0}$.

\rldhlass*
\begin{proof}
%
Concerning Property~$(i)$,
we have that the constraint satisfiability problem for $\sttr{\Omc}$ can be reduced to the
$\ourDL$
ontology satisfiability problem, seen as a fragment of $\mathcal{ALCHI}$, for which this problem is known to be decidable~\cite{Rud11}.
Indeed, let $\bigwedge_{i = 1}^{n} \alpha_{i}(\underline{x})$ be a conjunction of $\sig{\sttr{\Omc}}$-literals.
We have that $\bigwedge_{i = 1}^{n} \alpha_{i}(\underline{x})$ is satisfiable w.r.t. $\sttr{\Omc}$ iff 
$\sttr{\Omc} \cup \bigcup_{i = 1}^{n} \{ \alpha_{i}(\underline{x}) \}$ is satisfiable.
	Since each $\alpha_{i}(\underline{x})$ is a $\sig{\sttr{\Omc}}$-literal, the previous set of formulas is in turn satisfiable 
	iff $\sttr{\Omc} \cup \bigcup_{i = 1}^{n}\{ \alpha_{i}(\underline{a}) \}$ is satisfiable,
	with fresh $\underline{a} \in \NI \setminus \sig{\sttr{\Omc}}$.
	By definition, $\alpha_{i}(\underline{a})$ is the image under the standard translation of an assertion, and thus $\sttr{\Omc} \cup \bigcup_{i = 1}^{n} \{ \alpha_{i}(\underline{a}) \}$ is the image under the standard translation of an
	$\mathcal{ALCHI}$
	ontology, for which satisfiability can be decided.
	
To show Property~$(ii)$, we first require the following claim.

\begin{claim}
Given a signature $\Sigma$ without function symbols,
every universal $\Sigma$-theory $T$ that has the amalgamation property admits a model completion.
\end{claim}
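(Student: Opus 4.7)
The approach is to construct a candidate model completion $T^*$ explicitly from the syntactic data of $T$ and then verify both defining conditions of a model completion. The absence of function symbols simplifies matters because substructures are determined purely by their underlying sets together with the restrictions of predicates, so one does not need to worry about closure under function interpretations when amalgamating.

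For the construction, for each quantifier-free $\Sigma$-formula $\phi(\ux, y)$ I define the \emph{resultant} $R_\phi(\ux)$ as the conjunction of all quantifier-free $\chi(\ux)$ such that $T \models \forall \ux \forall y\,(\phi(\ux, y) \to \chi(\ux))$. I then take $T^*$ to be $T$ together with the axiom $\forall \ux\,(R_\phi(\ux) \to \exists y\,\phi(\ux, y))$ for every such $\phi$. In the intended application $\Sigma$ is finite, so for a fixed tuple $\ux$ there are only finitely many quantifier-free formulas up to $T$-equivalence, which ensures that $R_\phi(\ux)$ can be expressed as a single quantifier-free formula and that $T^*$ is a bona fide first-order theory.

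To verify condition (i) of the model completion definition---that every $\Sigma$-constraint satisfiable in a model of $T$ is also satisfiable in a model of $T^*$---I will show that every model $M$ of $T$ extends to a model of $T^*$ via a Henkin-style $\omega$-chain construction. At each stage, I enumerate a pair $(\phi, \ua)$ with $\ua$ in the current model $M_n$ satisfying $R_\phi(\ua)$, and extend $M_n$ to $M_{n+1}$ realizing $\phi(\ua, b)$ for a fresh element $b$. By compactness and the very definition of $R_\phi$, the theory $T$ together with the diagram of $\{\ua\}$ together with $\{\phi(\ua, b)\}$ is consistent---otherwise a finite conjunction of diagram literals would, modulo $T$, exclude $\phi(\ua, b)$ and thereby yield a new quantifier-free formula in $R_\phi$ not satisfied by $\ua$, a contradiction. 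Hence some model $N'$ of $T$ realizes $\phi(\ua, b)$ over the substructure $\{\ua\}$. This is where the amalgamation property is indispensable: to extend the full model $M_n$ (and not just $\{\ua\}$) to a model of $T$ realizing $\phi(\ua, b)$, I amalgamate $M_n$ with $N'$ over their common substructure $\{\ua\}$ via AP, producing $M_{n+1} \models T$ with the desired witness. Since $T$ is universal, unions of chains of $T$-models remain $T$-models, so the direct limit $M^*$ of the chain is a model of $T^*$ extending $M$.

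For condition (ii), quantifier elimination in $T^*$: the axioms added in the construction give $T^* \models R_\phi(\ux) \to \exists y\,\phi(\ux, y)$, while the converse $T^* \models \exists y\,\phi(\ux, y) \to R_\phi(\ux)$ is immediate from $T^* \supseteq T$ and the definition of $R_\phi$. Thus $T^* \models \exists y\,\phi(\ux, y) \leftrightarrow R_\phi(\ux)$, and by induction on quantifier depth every $\Sigma$-formula is $T^*$-equivalent to a quantifier-free one. The main obstacle is the consistency argument in step (i): the amalgamation property is used precisely to combine the local witnessing structure (realizing $\phi(\ua, b)$ over the finite substructure $\{\ua\}$) with the global ambient model $M_n$ into a single extension within $T$. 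Without AP these two models could fail to share any common $T$-extension, and the chain construction would break down; AP is thus the natural bridge from the local existence provided by compactness to the global realization required to build a model of $T^*$.
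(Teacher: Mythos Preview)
Your proposal is correct and essentially unpacks the argument that the paper defers to its references \cite{wheeler,LIP,CalEtAl20}: the paper's own ``proof'' of this claim is just a citation, but the quantifier-elimination algorithm it spells out in Remark~\ref{rem:qe} and at the end of the proof of Theorem~\ref{prop:rldhlass} is exactly your resultant construction (take the conjunction of all quantifier-free $\chi(\uy)$ with $T\models\phi(x,\uy)\to\chi(\uy)$), so your approach coincides with the one the paper has in mind.

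Two minor remarks. First, you explicitly restrict to finite $\Sigma$ so that $R_\phi$ is a single formula; the claim as literally stated does not assume this, but the paper only applies it to the finite signature $\Sigma_{T(\Omc)}$, and the cited proof in \cite[Proposition~3.2]{CalEtAl20} makes the same finiteness use, so this is not a gap for the paper's purposes. Second, when you amalgamate over ``the substructure $\{\ua\}$'', bear in mind that $\Sigma$ may contain individual names (these are not counted as function symbols in the paper's setup), so the substructure generated by $\ua$ also contains their interpretations; this does not affect the argument, since that substructure is still a model of the universal theory $T$ and the amalgamation step goes through unchanged.
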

\begin{proof}
Cf.~\cite{wheeler,LIP,CalEtAl20}.
\end{proof}

By the previous claim, since $\sig{T(\Omc)}$ does not contain function symbols and $T(\Omc)$ is universal, to prove that $T(\Omc)$ admits a model completion, it is enough to show that $T(\Omc)$ has the amalgamation property.
We prove this as follows.
Consider $\sig{T(\Omc)}$-interpretations $\Imc_{1}$ and $\Imc_{2}$ that are models of $T(\Omc)$, and let $\Imc_0$ be
a substructure of both $\Imc_{1}$ and $\Imc_{2}$ that is a model of $T(\Omc)$ (we assume w.l.o.g. that $\Delta^{\Imc_{1}} \cap  \Delta^{\Imc_{2}} = \Delta^{\Imc_{0}})$.
We define the $\sig{T(\Omc)}$-interpretation $\Imc = (\Delta^{\Imc}, \cdot^{\Imc})$ as follows:
\begin{itemize}
	\item $\Delta^{\Imc} = \Delta^{\Imc_{1}} \cup \Delta^{\Imc_{2}}$;
	\item for every individual symbol $a \in \sig{T(\Omc)}$, $a^{\Imc} = a^{\Imc_{1}}$;
	\item for every ($1$- or $2$-ary) predicate symbol $P \in \sig{T(\Omc)}$, $P^{\Imc} = P^{\Imc_{1}} \cup P^{\Imc_{2}}$.
\end{itemize}
Observe that $a^{\Imc} = a^{\Imc_{1}} = a^{\Imc_{0}} = a^{\Imc_{2}}$. Moreover, if $\underline{d} \in P^{\Imc}$, where $P$ is $n$-ary, for $n \in \{ 1, 2 \}$, then $\underline{d} \in (\Delta^{\Imc_{1}})^{n}$ and $\underline{d} \in P^{\Imc_{1}}$, or $\underline{d} \in (\Delta^{\Imc_{2}})^{n}$ and $\underline{d} \in P^{\Imc_{2}}$: this follows from the definition of $P^{\Imc}: =  P^{\Imc_{1}} \cup P^{\Imc_{2}}$.
Clearly, given embeddings $\mu_{1} \colon \Imc_{0} \longrightarrow  \Imc_{1}$, $\mu_{2} \colon \Imc_{0} \longrightarrow  \Imc_{2}$, the (inclusion) embeddings $i_{1} \colon \Imc_{1} \longrightarrow \Imc$, $i_{2} \colon \Imc_{2} \longrightarrow \Imc$ are such that $i_{1} \circ \mu_{1} = i_{2} \circ \mu_{2}$.
Thus, to show that $(\Imc, i_{1}, i_{2})$ is a $T(\Omc)$-amalgam of $\Imc_{1}$, $\Imc_{2}$ over $\Imc_{0}$, we have to prove that $\Imc$ is a model of $T(\Omc)$.
A formula $\p$ of $T(\Omc)$ has one of the following forms (we recall that formulas of $T(\Omc)$ are
given
in prenex normal form):
\begin{enumerate}
[label=$(\arabic*)$, align=left, leftmargin=*]
	\item $\forall x (A_{1}(x) \land \ldots \land A_{n}(x) \to \lambda(x))$;
	\item $\forall x \forall y (R_{1}(x,y) \to \lambda(x))$;
	\item $\forall x \forall y (R_{1}(x,y) \land A(y) \to \lambda(x))$;
	\item $\forall x \forall y (R_{1}(x,y) \to R_{2}(x,y))$;
	\item $\forall x \forall y (R_{1}(x,y) \to \lnot R_{2}(x,y))$;
\end{enumerate}
where: $A_{k} \in \NC$, for $k \in \{ 1, \ldots, n \}$; $\lambda \in \{ B, \lnot B \}$, with $B \in \NC$; $R_{i}(x,y) = P_{i}(x,y)$, if $R_{i} = P_{i}$, and $R_{i}(x,y) = P_{i}(y,x)$, if $R_{i} = P_{i}^{-}$, with $P_{i} \in \NR$ and $i \in \{ 1, 2 \}$.
We now show, reasoning by cases, that for every $j \in \{ 1, \ldots, 5 \}$ and every formula $\p \in T(\Omc)$
of the form $(j)$, $\Imc$ is a model of $\p$.

\begin{enumerate}
[label=$(\arabic*)$, align=left, leftmargin=*]
	\item Given $d \in \Delta^{\Imc}$, suppose that $\Imc \models A_{k}[d]$, i.e., $d \in A_{k}^{\Imc}$, for all $k \in \{ 1, \ldots, n \}$. By what already observed, we have that $d \in \Delta^{\Imc_{i}}$ and $d \in A_{k}^{\Imc_{i}}$, for $i = 1$ or $i = 2$, and thus $\Imc_{i} \models A_{k}[d]$. Since $\Imc_{i}$ is a model of $T(\Omc)$, and hence of $\p$, we have $\Imc_{i} \models \lambda[d]$. Given that $\lambda(x)$ is a literal and $\Imc_{i}$ is embedded in $\Imc$, we obtain that $\Imc \models \lambda[d]$, and thus $\Imc \models \p$.
	\item 
	Let $R_{1}(x,y) = P(x,y)$. Given a pair $(d,e)$ with $d,e \in \Delta^{\Imc}$, suppose that $\Imc \models R_{1}[d,e]$, 
	i.e., $(d,e) \in R_{1}^{\Imc}$. By what already observed, we have that $d,e \in \Delta^{\Imc_{i}}$, $(d,e) \in R_{1}^{\Imc_{i}}$, for $i = 1$ or $i = 2$, and thus $\Imc_{i} \models R_{1}[d,e]$. Since $\Imc_{i}$ is a model of $T(\Omc)$, and hence of $\p$, we have $\Imc_{i} \models \lambda[d]$. Given that $\lambda(x)$ is a literal and $\Imc_{i}$ is embedded in $\Imc$, we obtain that $\Imc \models \lambda[d]$, and thus $\Imc \models \p$. The case of $R_{1} = P(y,x)$ is analogous.
	\item Let $R_{1}(x,y) = P(x,y)$. Given a pair $(d,e)$ with $d,e \in \Delta^{\Imc}$, suppose that $\Imc \models R_{1}[d,e]$ and $\Imc \models A[e]$, i.e., $(d,e) \in R_{1}^{\Imc}$ and $e \in A^{\Imc}$. By what already observed, we have that $d,e \in \Delta^{\Imc_{i}}$, $(d,e) \in R_{1}^{\Imc_{i}}$, and $e \in A^{\Imc_{i}}$, for $i = 1$ or $i = 2$, and thus $\Imc_{i} \models R_{1}[d,e]$ and $\Imc_{i} \models A[e]$. Since $\Imc_{i}$ is a model of $T(\Omc)$, and hence of $\p$, we have $\Imc_{i} \models \lambda[d]$. Given that $\lambda(x)$ is a literal and $\Imc_{i}$ is embedded in $\Imc$, we obtain that $\Imc \models \lambda[d]$, and thus $\Imc \models \p$. The case of $R_{1} = P(y,x)$ is analogous.
	\item Let $R_{i}(x,y) = P_{i}(x,y)$, for $i \in \{ 1, 2 \}$. Given a pair $(d,e)$ with $d,e \in \Delta^{\Imc}$, suppose that $\Imc \models R_{1}[d,e]$, i.e., $(d,e) \in R_{1}^{\Imc}$. By what already observed, we have that $d,e \in \Delta^{\Imc_{i}}$ and $(d,e) \in R_{1}^{\Imc_{i}}$, for $i = 1$ or $i = 2$, and thus $\Imc_{i} \models R_{1}[d,e]$. Since $\Imc_{i}$ is a model of $T(\Omc)$, and hence of $\p$, we have $\Imc_{i} \models R_{2}[d,e]$. Given that $R_{2}(x,y)$ is a literal and $\Imc_{i}$ is embedded in $\Imc$, we obtain that $\Imc \models \lambda[d]$, and thus $\Imc \models \p$. The cases with $R_{i} = P_{i}(y,x)$, for $i = 1$ or $i = 2$, are analogous.
	\item This case is analogous to the previous one, since $\lnot R_{2}(x,y)$ is a literal and $\Imc_{i}$ is embedded in $\Imc$.
\end{enumerate}

Thus, we conclude that $\Imc \models \p$, for every $\p \in T(\Omc)$, i.e., $\Imc \models T(\Omc)$.
This completes the proof that $\Imc$ is a $T(\Omc)$-amalgam of $\Imc_{1}$, $\Imc_{2}$ over $\Imc_{0}$, hence $T(\Omc)$ has the amalgamation property.
By the claim above, we obtain that $T(\Omc)$ admits a model completion.

	Following an analogous argument to the one used in~\cite[Proposition 3.2]{CalEtAl20}, one can also exhibit the algorithm for quantifier elimination in the model completion of $\sttr{\Omc}$: given a formula $\exists x \p(x,\uy)$, take the quantifier-free formula $\psi(\uy)$ as the conjunction of the set of all quantifier-free formulae $\chi(\uy)$ such that $\p(x,\uy)\rightarrow \chi(\uy)$ is a logical consequences of $\sttr{\Omc}$ (they are clearly finitely many, up to $\sttr{\Omc}$-equivalence).

\end{proof}




\subsection*{Proofs for Section~\ref{sec:dlsas}}

\begin{lemma}
\label{lem:nocasedeffunctsas}
The safety problem for an \ourDL-SAS $\Smc$ can be 
reduced to the safety problem for an \ourDL-SAS $\Smc'$ (the size of which is polynomial in the size of $\Smc$) without any occurrence of case-defined functions.
\end{lemma}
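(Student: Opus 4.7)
The plan is to eliminate all occurrences of case-defined functions from each transition $\tau_j$ of $\Smc$ by replacing it with a collection of simpler transitions in $\Smc'$, in which every update is a direct term assignment (equivalently, a trivial case-defined function whose partition is $\{\top\}$). The main obstacle is to avoid the naive combinatorial explosion that would arise from enumerating all combinations of partition cells across the $n$ updates of $\tau_j$, which is exponential in the number of artifact variables.

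Each update $x'_i = F^j_i(\ux,\uy)$, where $F^j_i$ is based on the $\Omc$-partition $\{\kappa^{j,i}_k(\ux,\uy)\}_{k=1}^{n_{j,i}}$ and terms $\{t^{j,i}_k(\ux,\uy)\}_{k=1}^{n_{j,i}}$, is equivalent on every model of $\Omc$ to $\bigvee_k \bigl(\kappa^{j,i}_k(\ux,\uy) \land x'_i = t^{j,i}_k(\ux,\uy)\bigr)$. Rather than distributing the resulting conjunction of disjunctions into DNF, I would sequentialise the $n$ updates of each $\tau_j$ by augmenting the tuple of artifact variables of $\Smc'$ with: a phase variable $p$, whose intended values are distinct individual constants encoding either the ``normal'' state or the state ``in the middle of simulating $\tau_j$, about to perform update $i$''; copies $\bar{\ux}$, $\bar{\uy}$ of the pre-state and of the existential witnesses; and a tuple $\ux^{\mathrm{tmp}}$ used to stage the new values. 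The initial state formula $\iota'$ extends $\iota$ by setting $p$ to the normal-mode constant and all remaining auxiliary variables to the undefined value $\ex{u}$.

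Each original $\tau_j$ is then simulated by the following sub-transitions, all of which use only identity assignments or direct term assignments for the variables they modify: (i) one ``init'' sub-transition, enabled when $p$ equals the normal-mode constant, with guard $\gamma^j(\ux,\uy)$, which copies $\ux,\uy$ into $\bar{\ux},\bar{\uy}$ and sets $p$ to the mode for update $1$ of $\tau_j$; (ii) for every $i \in \{1,\dots,n\}$ and every case $k \in \{1,\dots,n_{j,i}\}$, a ``step$(j,i,k)$'' sub-transition, enabled when $p$ equals the mode for update $i$ of $\tau_j$, with guard $\kappa^{j,i}_k(\bar{\ux},\bar{\uy})$, which assigns $t^{j,i}_k(\bar{\ux},\bar{\uy})$ to $x^{\mathrm{tmp}}_i$ and advances $p$ to the next mode; and (iii) one ``commit'' sub-transition, enabled when $p$ equals the mode following the last update of $\tau_j$, which copies $\ux^{\mathrm{tmp}}$ into $\ux$ and resets $p$ to the normal-mode constant. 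A simple count shows that the total number of sub-transitions is $O(\sum_j (2 + \sum_i n_{j,i}))$ and each has polynomial size, giving the required polynomial bound on $\Smc'$.

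Finally, I would take the safety formula for $\Smc'$ to be $\nu(\ux) \land (p = \mathsf{normal})$ and prove that $\Smc$ is safe w.r.t.\ $\nu(\ux)$ iff $\Smc'$ is safe w.r.t.\ this new formula. For the forward direction, any unsafe trace of $\Smc$ is expanded into an unsafe trace of $\Smc'$ by replacing each firing of $\tau_j$ with its corresponding init-steps-commit block, exploiting the fact that the saved pre-state $\bar{\ux},\bar{\uy}$ makes the guards $\kappa^{j,i}_k$ and the terms $t^{j,i}_k$ evaluate on the original pre-state values, so that the committed $\ux$ coincides with the outcome of $\tau_j$. For the backward direction, the key observation is that $\ux$ is frozen during every intermediate phase of $\Smc'$, and that, because the init guard requires $p = \mathsf{normal}$ while all step and commit guards require $p$ to match a specific non-normal mode, the sub-transitions cluster into non-interleaved init-steps-commit blocks. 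Hence every unsafe state of $\Smc'$ satisfying $p = \mathsf{normal}$ can be compressed back into a trace of $\Smc$ reaching an unsafe $\ux$-valuation.
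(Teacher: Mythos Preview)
Your construction is correct, but it follows a genuinely different route from the paper's argument. The paper eliminates case-defined functions by a \emph{direct syntactic expansion} of each transition: every update $x'_i = F^{h}_{i}(\ux,\uy)$ is replaced by its defining disjunction $\bigvee_j(\kappa^{h,j}_i \land x'_i = t^{h,j}_i)$, and the resulting formula is split into a disjunction of new transitions $\tau^{\dagger}_{h,j}$, each with a plain term assignment in every update. No new artifact variables, no new constants, and no change to $\Omc$ are introduced; the safety formula stays literally $\nu(\ux)$, and the equivalence of safety is immediate from $\Omc \models \bigvee_j \tau_j \leftrightarrow \bigvee_{j'} \tau'_{j'}$.

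Your approach instead \emph{sequentialises} the $n$ updates of each $\tau_j$ through an explicit micro-protocol (init / step / commit) driven by a phase variable and staging copies $\bar{\ux},\bar{\uy},\ux^{\mathrm{tmp}}$. This buys you a very transparent polynomial bound---the number of sub-transitions is linear in the total number of partition cells---precisely because you never form the product of the $n$ per-variable case splits. The price is a larger artifact tuple, an extension of the ontology with fresh mode constants (and the pairwise inequalities $\lnot(c = c')$ needed so that the phase tests are semantically meaningful; you should say this explicitly, since guards must be conjunctions of $\sig{\Omc}$-literals and the constants must be forced distinct in every model), and a modified safety formula $\nu \land (p = \mathsf{normal})$. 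Your bisimulation argument (freezing $\ux$ during intermediate phases, non-interleaving enforced by $p$) is the right one and goes through. In short: the paper trades a slightly more delicate size analysis for keeping $(\Omc,\ux,\nu)$ untouched, while your encoding trades extra state for an unambiguous polynomial bound; both are valid reductions in the sense required by the lemma.
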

\begin{proof}
We first require the following claim.
\begin{claim}
For every \ourDL-SAS
$\Smc =  (\Omc, \underline{x}, \iota(\underline{x}), \bigcup_{j = 1}^{m} \{ \tau_{j} (\underline{x}, \underline{x}') \})$, there exists an \ourDL-SAS $\Smc' =  (\Omc, \underline{x}, \iota(\underline{x}), \bigcup_{j = 1}^{m'} \{ \tau'_{j} (\underline{x}, \underline{x}') \})$ such that:
$(i)$~the transition formulas $\tau'_{j}$, for $0 \leq j \leq m'$, do not contain any case-defined function;
and
$(ii)$ for every safety formula $\nu(\underline{x})$, there exists a formula $\vartheta^{k}_{\Smc,\nu}$ of the form (\ref{eq:safetyform}) that is satisfiable w.r.t. $\Omc$, for some $k\geq 0$, iff there exists a formula $\vartheta^{k'}_{\Smc',\nu}$ of the form
 (\ref{eq:safetyform}) 
that is satisfiable w.r.t. $\Omc$, for some $k'\geq 0$.
The construction of $\Smc'$ is polynomially long in the size of $\Smc$ (i.e., $O(n^2)$, when $n$ is the overall size of $\Smc$).
\end{claim}
\begin{proof}[Proof of Claim]
%
First, concerning Point~$(i)$, 
we observe that, given an \ourDL ontology $\Omc$ and a formula $\p$ containing a case-defined function $F$ (based on some $\Omc$-partition and list of $\sig{\Omc}$-terms), there exists a formula $\p^{\dagger}$ not containing any occurrence of $F$ and such that $\Omc \models \p \leftrightarrow \p^{\dagger}$.
Such a formula $\p^{\dagger}$ can be obtained from $\p$ by substituting every atom $\alpha$ in which $F(\underline{x})$ occurs by the disjunction $\bigvee_{i = 1}^{n} (\kappa_{i}(\underline{x}) \land \alpha(t_{i}(\underline{x}))$.
Now, 
consider $\Smc =  (\Omc, \underline{x}, \iota(\underline{x}), \bigcup_{j = 1}^{m} \{ \tau_{j} (\underline{x}, \underline{x}') \})$
and let $1 \leq h \leq m$ be such that
$\tau_{h} (\underline{x}, \underline{x}') = \exists \underline{y} (\gamma^{h}(\underline{x}, \underline{y}) \land \bigwedge_{i = 1}^{n} x'_{i}  = F^{h}_{i}(\underline{x}, \underline{y}))$, where $F^{k}_{i}$ are cased-defined functions based on some $\Omc$-partitions and lists of $\sig{\Omc}$-terms.
By substituting 
each
$x'_{i}  = F^{h}_{i}(\underline{x}, \underline{y})$
with
$\bigvee_{j = 1}^{p} ( \kappa_{i}^{h,j}(\underline{x}) \land x'_{i}  = t_{i}^{h,j}(\underline{x}, \underline{y}) )$
and applying first-order logic transformations,
we obtain the formula
$\bigvee_{j = 1}^{p} \tau^{\dagger}_{h, j} (\underline{x}, \underline{x}')$,
where
\[
	\tau^{\dagger}_{h, j} (\underline{x}, \underline{x}') = \exists \underline{y}(\bigwedge_{i = 1}^{n} (\gamma^{h}(\underline{x}, \underline{y}) \land \kappa_{i}^{h,j}(\underline{x}) ) \land \bigwedge_{i = 1}^{n} x'_{i}  = t_{i}^{h,j}(\underline{x}, \underline{y}) ),
\]
%
which is equivalent to the original $\tau_{h} (\underline{x}, \underline{x}')$ in all models of $\Omc$. Notice that the size of each $\tau^{\dagger}_{h, j}$ is clearly $O(n)$, where $n$ is the overall size of the input $\Smc$, 
 and, since $p=O(n)$, there are $O(n^2)$ such $\tau^{\dagger}_{h, j}$. Hence, the size of $\Smc'$ is  $O(n^2)$, as wanted.
 By
taking,
for all the relevant $1 \leq h \leq m$,
\[
\bigl(\bigcup_{j = 1}^{m} \{ \tau_{j} (\underline{x}, \underline{x}') \} \setminus \{ \tau_{h} (\underline{x}, \underline{x}') \} \bigr)
\cup
\bigcup_{j = 1}^{p}  \{ \tau^{\dagger}_{h, j} (\underline{x}, \underline{x}') \},
\]
we obtain an \ourDL-SAS $\Smc' = (\Omc, \underline{x}, \iota(\underline{x}), \bigcup_{j = 1}^{m'} \{ \tau'_{j} (\underline{x}, \underline{x}') \})$
that does not contain any case defined function.
Moreover, concerning Point~$(ii)$, it is straightforward to see that, since $\Omc \models  \bigvee_{j = 1}^{m}  \tau_{j} \leftrightarrow \bigvee_{j = 1}^{m'}  \tau'_{j}$, 
for every safety formula $\nu(\underline{x})$, 
there exists a formula $\vartheta^{k}_{\Smc, \nu}$ of the form
(\ref{eq:safetyform}) that is satisfiable w.r.t. $\Omc$, for some $k \geq 0$, iff there exists a formula $\vartheta^{k'}_{\Smc', \nu}$ of the form
(\ref{eq:safetyform})
that is satisfiable w.r.t. $\Omc$, for some $k'\geq 0$.


\end{proof}

The statement of the lemma then follows from the previous claim: it is sufficient to notice that,
for every safety formula $\nu(\underline{x})$,
there is no $k\geq 0$ and no formula $\vartheta^{k}_{\Smc, \nu}$ of the form
(\ref{eq:safetyform}) that is satisfiable w.r.t. $\Omc$ iff there is no $k'\geq 0$ and no formula
$\vartheta^{k'}_{\Smc', \nu}$ of the form (\ref{eq:safetyform}) 
that is satisfiable w.r.t. $\Omc$. This implies that  $\Smc$ is safe w.r.t. $\nu(\underline{x})$ iff $\Smc'$ is so, where $\Smc'$ does not contain case-defined functions.

\end{proof}

\soundcomplete*
\begin{proof}
Thanks to Lemma~\ref{lem:nocasedeffunctsas}, in the rest of this proof we assume w.l.o.g. that an \ourDL-SAS $\Smc = (\Omc, \underline{x}, \iota(\underline{x}), \bigcup_{j = 1}^{m} \{ \tau_{j} (\underline{x}, \underline{x}') \})$ does not contain case-defined functions.
Moreover,
we require the following claim, an immediate consequence of the definitions.

\medskip
\noindent
\textit{Claim~\ref{cla:tracesat}.}
For every safety formula $\nu(\underline{x})$ for $\Smc$ and every $k \geq 0$, a formula $\vartheta$ of the form~\eqref{eq:safetyform} is satisfiable w.r.t. $\Omc$  iff $\vartheta$ is satisfiable w.r.t. $T(\Omc)$.

\medskip

Now,
	we show that, instead of considering satisfiability of formulae of the form~\eqref{eq:safetyform} in models of $T(\Omc)$, we can concentrate on $T(\Omc)^{\ast}$-satisfiability.
	 	By definition,
	an \ourDL-SAS $\Smc$ is unsafe iff there is a formula $\vartheta$ of the form~\eqref{eq:safetyform}	\[
	\iota(\underline{x}^{0}) \land \tau_{j_{0}}(\underline{x}^{0}, \underline{x}^{1}) \land \ldots \land \tau_{j_{k-1}}(\underline{x}^{k-1}, \underline{x}^{k}) \land \nu(\underline{x}^{k})
	\]
	 that is satisfiable w.r.t. $\Omc$, for some $k \geq 0$. 
	By
	the observation
	above,
	we get that $\Smc$ is unsafe iff  there is a formula $\vartheta$ of the form~\eqref{eq:safetyform} that is satisfiable
w.r.t.
	$T(\Omc)$, for some $k \geq 0$.
		Thanks to Theorem~\ref{prop:rldhlass},
		$T(\Omc)$ admits a model completion $T(\Omc)^{\ast}$. Hence, 
	since the formulas of the form~\eqref{eq:safetyform} are existential
	$\sig{T(\Omc)}$-formulas, and by using the property that every model of an FO theory $T$  embeds into a model of its model completion $T^*$, we conclude that $\Smc$ is unsafe iff, for some $k\geq 0$, there is a formula  $\vartheta$ of the form~\eqref{eq:safetyform} that is satisfiable in a model of $T(\Omc)^{\ast}$.
	Thus, for establishing (un)safety of $\Smc$, we can concentrate on satisfiability of formulas of the form~\eqref{eq:safetyform} in models of $T(\Omc)^{\ast}$.

	We now continue the proof by adopting arguments similar to the ones contained in the proof of~\cite[Theorem 4.2]{CalEtAl20}.
	We want to show the correctness of Algorithm~\ref{fig:algorithm}. 	
	First, we preliminarily give some useful remarks on the algorithm.
	Let us call $B_n$ (resp. $\phi_n$), with $n\geq 0$, the status of the variable $B$ (resp. $\phi$) after $n$ executions in Line 4 (resp. Line 6) of Algorithm~\ref{fig:algorithm} ($n=0$ corresponds to the status of the variables in Line~1). 
	Notice that we have
	\begin{equation}\label{eq:pre-phi}
	T(\Omc)^*\models \phi_{j+1}\leftrightarrow Pre(\tau, \phi_j)
	\end{equation} for all $j$ and that
	\begin{equation}\label{eq:invariant}
	T(\Omc)\models B_n \leftrightarrow \bigvee_{0\leq j<n} \phi_j
	\end{equation}
	is an invariant of the algorithm.

	We now show that if the algorithm returns an $\mathsf{unsafe}$ outcome, this outcome is correct, i.e., $\Smc$ is really unsafe.
	Since we are considering satisfiability of formulae of the form~\eqref{eq:safetyform} in models of $T(\Omc)^{\ast}$, we can apply the quantifier elimination procedure of $T(\Omc)^{\ast}$: it can be easily seen that the satisfiability of the quantifier-free formula we get in this way is equivalent to the satisfiability of
	$\iota\wedge \phi_n$: clearly, this is again a quantifier-free formula (because of line 6 of Algorithm~\ref{fig:algorithm}). Since $T(\Omc)$-satisfiability and $T(\Omc)^{\ast}$-satisfiability are equivalent (by definition of model completion) when dealing with existential (and in particular, quantifier-free) formulae, the $T(\Omc)$-satisfiability of $\iota\wedge \phi_n$	%
	is decidable thanks to Theorem~\ref{prop:rldhlass}.
	Hence, if Algorithm~\ref{fig:algorithm} terminates with an
	$\mathsf{unsafe}$ outcome, then there exists a formula $\vartheta$ of the form~\eqref{eq:safetyform}  that is $T(\Omc)^{\ast}$-satisfiable.  This exactly means that $\Smc$ is unsafe, as wanted.
	
	We now show that if the algorithm returns a $\mathsf{safe}$ outcome, this outcome is correct, i.e., $\Smc$ is really safe.
	Consider the satisfiability test in Line~2. This is again a satisfiability test for a quantifier-free $\sig{T(\Omc)}$-formula, thus it is decidable. In case of a $\mathsf{safe}$ outcome, we have that $T(\Omc) \models \phi_n\to B_n$; 
	we claim that, if we continued executing the loop of Algorithm~\ref{fig:algorithm}, we would get that
\begin{equation}\label{eq:claim-safe}
T(\Omc)^{\ast}\models B_m\leftrightarrow B_n,
\end{equation}
for all $m\geq n$.
We justify Claim~\eqref{eq:claim-safe} below. 

From $T(\Omc) \models \phi_n\to B_n$, taking into  consideration that $T(\Omc)^{\ast}\supseteq T(\Omc)$ 
and that Formula~\eqref{eq:pre-phi} 
holds,
we get $T(\Omc)^{\ast}\models\phi_{n+1} \to Pre(\tau,B_n)$. Since $Pre$ commutes with disjunctions (i.e., $Pre(\tau,\bigvee_j \phi_j)$
is logically equivalent to $\bigvee_j Pre(\tau,\phi_j)$),  we also have $T(\Omc)^{\ast}\models Pre(\tau,B_n)\leftrightarrow \bigvee_{1\leq j\leq n} \phi_j$ by the Invariant~\eqref{eq:invariant} and
by Formula~\eqref{eq:pre-phi}
again.  
By using the entailment $T(\Omc) \models \phi_n\to B_n$ once more,  we get  $T(\Omc)^{\ast}\models\phi_{n+1} \to B_n$ and also that $T(\Omc)^{\ast}\models B_{n+1}\leftrightarrow B_n$, thus we finally obtain that $T(\Omc)^{\ast}\models \phi_{n+1} \to B_{n+1}$. Since $\phi_{n+1} \to B_{n+1}$ is quantifier-free, $T(\Omc)^{\ast}\models \phi_{n+1} \to B_{n+1}$ implies $T(\Omc) \models \phi_{n+1} \to B_{n+1}$. This argument can be repeated for all $m\geq n$, obtaining that $T(\Omc)^{\ast}\models B_m\leftrightarrow B_n$ for all $m\geq n$, i.e., Claim~\eqref{eq:claim-safe}.

	This would entail that $\iota\wedge \phi_m$ is always unsatisfiable (because of~\eqref{eq:invariant} and because $\iota\wedge \phi_j$ was unsatisfiable
	for all $j<n$), which is the same (as remarked above) as saying that all formulae~\eqref{eq:safetyform} are $T(\Omc)^{\ast}$-unsatisfiable. Thus, $\Smc$ is safe.
	
	\end{proof}

	\dlsasdec*
	\begin{proof}
	
	There are only finitely many 
	quantifier-free $\Sigma_{T(\Omc)}$-formulas, up to $T(\Omc)$-equivalence, that could be built out of a finite set of variables $\ux$: this holds for every \ourDL ontology $\Omc$.
	Thanks to the quantifier elimination procedure in Line 6, the overall number of variables in $\phi$ is never increased: notice that, without quantifier elimination, computing preimages $\Pre(\tau, \phi_j)$ would introduce in $\phi_{j+1}$ new quantified variables, because of the presence of existentially quantified variables $\uy$ in $\tau$.  This implies that globally there are only finitely many quantifier-free $\Sigma_{T(\Omc)}$-formulas that Algorithm~\ref{fig:algorithm} needs to analyse. Hence, Algorithm~\ref{fig:algorithm} must terminate:
	because of~\eqref{eq:invariant},
	the unsatisfiability test of Line 2 must eventually succeed, if the unsatisfiability test of Line 3 never does so.

	Concerning complexity, we need to modify Algorithm~\ref{fig:algorithm}. We first notice that, thanks to Lemma~\ref{lem:nocasedeffunctsas}, the preprocessing
	that converts an \ourDL-SAS with occurrences of case-defined functions into its equivalent \ourDL-SAS without any occurrence of case-defined functions does not increase the overall complexity of the problem. Moreover, the translation of an \ourDL-ontology $\Omc$ into $T(\Omc)$ requires polynomial time. Then, we adopt a nondeterministic procedure, analogous to the one in ~\cite[Theorem 6.1]{CalEtAl19}, that makes the complexity $\NPSpace$: the
	main
	difference from~\cite[Theorem 6.1]{CalEtAl19} is that in our signatures, instead of unary functions, we have unary and binary relational symbols, but the argument works similarly. By Savitch's Theorem ($\PSpace = \NPSpace$), we conclude the proof.

\end{proof}


\end{document}